\let\oldbibliography\thebibliography
\renewcommand{\thebibliography}[1]{\oldbibliography{#1}
\setlength{\itemsep}{0pt}}
\theoremstyle{plain}
\newtheorem{theorem}{Theorem}[section]
\newtheorem{lemma}[theorem]{Lemma}
\theoremstyle{definition}
\theoremstyle{remark}
\newcommand{\R}{\mathbb{R}}
\newcommand{\bigoh}{\mathcal{O}}
\newcommand{\size}[1]{\left|#1\right|}
\newcommand{\E}{\mathbb{E}}
\title{Batched Gaussian Process Bandit Optimization via Determinantal Point Processes}
\author{
  Tarun Kathuria, Amit Deshpande, Pushmeet Kohli \\
  Microsoft Research\\
  \texttt{\{t-takat, amitdesh, pkohli\}@microsoft.com}
}
\begin{document}
\date{\vspace{-0.2em}}
\maketitle

\begin{abstract}
 Gaussian Process bandit optimization has emerged as a powerful tool for optimizing noisy black box functions. One example in machine learning is hyper-parameter optimization where each evaluation of the target function requires training a model which may involve days or even weeks of computation. Most methods for this so-called ``Bayesian optimization'' only allow sequential exploration of the parameter space. However, it is often desirable to propose \textsl{batches} or sets of parameter values to explore simultaneously, especially when there are large parallel processing facilities at our disposal. Batch methods require modeling the interaction between the different evaluations in the batch, which can be expensive in complex scenarios. In this paper, we propose a new approach for parallelizing Bayesian optimization by modeling the diversity of a batch via Determinantal point processes (DPPs) whose kernels are learned {\it automatically}. This allows us to generalize a previous result as well as prove better regret bounds based on DPP sampling. Our experiments on a variety of synthetic and real-world robotics and hyper-parameter optimization tasks indicate that our DPP-based methods, especially those based on DPP sampling, outperform state-of-the-art methods.
\end{abstract}
\section{Introduction}
  The optimization of an unknown function based on noisy observations is a fundamental problem in various real world domains, e.g., engineering design~\cite{WangShan}, finance~\cite{ZiembaVickson} and hyper-parameter optimization~\cite{Snoek}. In recent years, an increasingly popular direction has been to model smoothness assumptions about the function via a Gaussian Process (GP), which provides an easy way to compute the posterior distribution of the unknown function, and thereby uncertainty estimates that help to decide where to evaluate the function next, in search of an optima. This \textit{Bayesian optimization} (BO) framework has received considerable attention in tuning of hyper-parameters for complex models and algorithms in Machine Learning, Robotics and Computer Vision~\cite{KrauseOng,Thornton,Snoek,GonzalezGlasses16}. 

Apart from a few notable exceptions~\cite{DesautelsBUCB,ContalUCBPE,BBOLP}, most methods for Bayesian optimization work by exploring one parameter value at a time. 
However, in many applications, it may be possible and, moreover, desirable to run multiple function evaluations in parallel. A case in point is when the underlying function corresponds to a laboratory experiment where multiple experimental setups are available or when the underlying function is the result of a costly computer simulation and multiple simulations can be run across different processors in parallel. By parallelizing the experiments, substantially more information can be gathered in the same time-frame; however, future actions must be chosen without the benefit of intermediate results. One might conceptualize these problems as choosing ``batches'' of experiments to run simultaneously. The key challenge is to assemble batches (out of a combinatorially large set of batches) of experiments that both explore the function and exploit by focusing on regions with high estimated value. 

\subsection{Our Contributions} 
Given that functions sampled from GPs usually have some degree of smoothness, in the so-called \textit{batch Bayesian optimization} (BBO) methods, it is desirable to choose batches which are diverse. Indeed, this is the motivation behind many popular BBO methods like the BUCB \cite{DesautelsBUCB}, UCB-PE \cite{ContalUCBPE} and Local Penalization \cite{BBOLP}. Motivated by this long line of work in BBO, we propose a new approach that employs Determinantal Point Processes (DPPs) to select diverse batches of evaluations. DPPs are probability measures over subsets of a ground set that promote diversity, have applications in statistical physics and random matrix theory \cite{Shirai2003,Lyons2003}, and have efficient sampling algorithms \cite{Kulesza2011,Kulesza2012}. The two main ways for fixed cardinality subset selection via DPPs are that of choosing the subset which maximizes the determinant [DPP-MAX, Theorem \ref{DPPMAXThm}] and sampling a subset according to the determinantal probability measure [DPP-SAMPLE, Theorem \ref{DPPSAMPLEThm}]. Following UCB-PE \cite{ContalUCBPE}, our methods also choose the first point via an acquisition function, and then the rest of the points are selected from a relevance region using a DPP. Since DPPs crucially depend on the choice of the DPP kernel, it is important to choose the right kernel. Our method allows the kernel to change across iterations and automatically compute it based on the observed data. This kernel is intimately linked to the GP kernel used to model the function; it is in fact exactly the posterior kernel function of the GP. The acquisition functions we consider are EST~\cite{ZiWang}, a recently proposed sequential MAP-estimate based Bayesian optimization algorithm with regret bounds independent of the size of the domain, and UCB~\cite{Srinivas}. In fact, we show that UCB-PE can be cast into our framework as just being DPP-MAX where the maximization is done via a greedy selection rule.

Given that DPP-MAX is too greedy, it may be desirable to allow for uncertainty in the observations. Thus, we define DPP-SAMPLE which selects the batches via sampling subsets from DPPs, and show that the expected regret is smaller than that of DPP-MAX. To provide a fair comparison with an existing method, BUCB, we also derive regret bounds for B-EST [Theorem \ref{BESTThm}]. Finally, for all methods with known regret bounds, the key quantity is the information gain. In the appendix, we also provide a simpler proof of the information gain for the widely-used RBF kernel which also improves the bound from $\mathcal{O}((\log T)^{d+1})$~\cite{Seeger,Srinivas} to $\mathcal{O}((\log T)^{d})$. We conclude with experiments on synthetic and real-world robotics and hyper-parameter optimization for extreme multi-label classification tasks which demonstrate that our DPP-based methods, especially the sampling based ones are superior or competitive with the existing baselines.

\subsection{Related Work}
One of the key tasks involved in black box optimization is of choosing actions that both explore the function and exploit our knowledge about likely high reward regions in the function's domain. This exploration-exploitation trade-off becomes especially important when the function is expensive to evaluate. This exploration-exploitation trade off naturally leads to modeling this problem in the multi-armed bandit paradigm~\cite{Robbins}, where the goal is to maximize cumulative reward by optimally balancing this trade-off. Srinivas {\em et al}. \cite{Srinivas} analyzed the Gaussian Process Upper Confidence Bound (GP-UCB) algorithm, a simple and intuitive Bayesian method~\cite{Auer} to achieve the first sub-linear regret bounds for Gaussian process bandit optimization. These bounds however grow logarithmically in the size of the (finite) search space. 

Recent work by Wang {\em et al}.~\cite{ZiWang} considered an intuitive MAP-estimate based strategy (EST) which involves estimating the maximum value of a function and choosing a point which has maximum probability of achieving this maximum value. They derive regret bounds for this strategy and show that the bounds are actually independent of the size of the search space. The problem setting for both UCB and EST is of optimizing a particular \textit{acquisition function}. Other popular acquisition functions include expected improvement (EI), probability of improvement over a certain threshold (PI). Along with these, there is also work on Entropy search (ES)~\cite{EntSearch} and its variant, predictive entropy search (PES)~\cite{PredEntSearch} which instead aims at minimizing the uncertainty about the location of the optimum of the function. All the fore-mentioned methods, though, are inherently sequential in nature.

The BUCB and UCB-PE both depend on the crucial observation that the variance of the posterior distribution does not depend on the actual values of the function at the selected points. They exploit this fact by ``hallucinating'' the function values to be as predicted by the posterior mean. The BUCB algorithm chooses the batch by sequentially selecting the points with the maximum UCB score keeping the mean function the same and only updating the variance. The problem with this naive approach is that it is too ``overconfident'' of the observations which causes the confidence bounds on the function values to shrink very quickly as we go deeper into the batch. This is fixed by a careful initialization and expanding the confidence bounds which leads to regret bounds which are worse than that of UCB by some multiplicative factor (independent of T and B). The UCB-PE algorithm chooses the first point of the batch via the UCB score and then defines a ``relevance region'' and selects the remaining points from this region greedily to maximize the \textit{information gain}, in order to focus on pure exploration (PE). This algorithm does not require any initialization like the BUCB and, in fact, achieves better regret bounds than the BUCB.  

Both BUCB and UCB-PE, however, are too greedy in their selection of batches which may be really far from the optimal due to our ``immediate overconfidence'' of the values. Indeed this is the criticism of these two methods by a recently proposed BBO strategy PPES~\cite{ParallelPredEntSearch}, which parallelizes predictive entropy search based methods and shows considerable improvements over the BUCB and UCB-PE methods. Another recently proposed method is the Local Penalization (LP) \cite{BBOLP}, which assumes that the function is Lipschitz continuous and tries to estimate the Lipschitz constant. Since assumptions of Lipschitz continuity naturally allow one to place bounds on how far the optimum of $f$ is from a certain location, they work to smoothly reduce the value of the acquisition function in a neighborhood of any point reflecting the belief about the distance of this point to the maxima. However, assumptions of Lipschitzness are too coarse-grained and it is unclear how their method to  estimate the Lipschitz constant and modelling of local penalization affects the performance from a theoretical standpoint. Our algorithms, in constrast, are general and do not assume anything about the function other than it being drawn from a Gaussian Process.
\section{Preliminaries}
\subsection{Gaussian Process Bandit Optimization}
We address the problem of finding, in the lowest possible number of iterations, the maximum ($m$) of an unknown function $f : \mathcal{X} \rightarrow \R$ where $\mathcal{X} \subset \R^d$, i.e.,
\small
\begin{align*}
  m = f(x^*) = \max_{x \in \mathcal{X}} f(x).
\end{align*}
\normalsize
We consider the domain to be discrete as it is well-known how to obtain regret bounds for continous, compact domains via suitable discretizations \cite{Srinivas}. At each iteration $t$, we choose a batch $\{x_{t,b}\}_{1 \leq b \leq B}$ of $B$ points and then simultaneously observe  the noisy values taken by $f$ at these points, $y_{t,b} = f(x_{t,b}) + \epsilon_{t,b}$, 
 where $\epsilon_{t,k}$ is i.i.d. Gaussian noise $\mathcal{N}(0, \sigma^2)$. The function is assumed to be drawn from a Gaussian process (GP), i.e., $f \sim GP(0, k)$, where $k : \mathcal{X}^2 \rightarrow \R_+$ is the kernel function. Given the observations $\mathcal{D}_t = \{(x_\tau,y_\tau)_{\tau=1}^t\}$ up to time $t$, we obtain the posterior mean and covariance functions \cite{RasWil} via the kernel matrix $K_t = [k(x_i,x_j)]_{x_i, x_j \in \mathcal{D}_t}$ and $\mathbf{k_t}(x) = [k(x_i,x)]_{x_i \in \mathcal{D}_t}$ : $\mu_{t}(x) = \mathbf{k_t}(x)^T(K_t+\sigma^2 I )^{-1}\mathbf{y_t}$ and $k_{t}(x,x') = k(x,x') - \mathbf{k_t}(x)^T(K_t + \sigma^2 I)^{-1}\mathbf{k_{t}}(x')$. The posterior variance is given by $\sigma_t^2(x) = k_t(x,x)$. Define the Upper Confidence Bound (UCB) $f^{+}$ and Lower Confidence Bound (LCB) $f^{-}$ as
 \small
\begin{align*}
f^{+}_t(x) = \mu_{t-1}(x) + \beta_t^{1/2}\sigma_{t-1}(x) \ \ \ \ \ \ \ \ \ f^{-}_t(x) = \mu_{t-1}(x) - \beta_t^{1/2}\sigma_{t-1}(x)
\end{align*}
\normalsize
  A crucial observation made in BUCB \cite{DesautelsBUCB} and UCB-PE \cite{ContalUCBPE} is that the posterior covariance and variance functions do not depend on the actual function values at the set of points. The EST algorithm in \cite{ZiWang} chooses at each timestep $t$,the point which has the maximum posterior probability of attaining the maximum value $m$, i.e., the $\arg\max_{x \in \mathcal{X}}\mathsf{Pr}(M_x | m, \mathcal{D}_t)$ where $M_x$ is the event that point $x$ achieves the maximum value. This turns out to be equal to $\arg\min_{x \in \mathcal{X}} \big[(m - \mu_t(x))/\sigma_t(x)\big]$. Note that this actually depends on the value of $m$ which, in most cases, is unknown. \cite{ZiWang} get around this by using an approximation $\hat{m}$ which, under certain conditions specified in their paper, is an upper bound on $m$. They provide two ways to get the estimate $\hat{m}$, namely ESTa and ESTn. We refer the reader to \cite{ZiWang} for details of the two estimates and refer to ESTa as EST.

Assuming that the horizon $T$ is unknown, a strategy has to be good at any iteration. Let $r_{t,b}$ denote the \textit{simple regret}, the difference between the value of the maxima and the point queried $x_{t,k}$, i.e., $r_{t,b} = \max_{x \in \mathcal{X}} f(x) - f(x_{t,b})$. While, UCB-PE aims at minimizing a batched cumulative regret, in this paper we will focus on the standard full cumulative regret defined as $R_{TB} = \sum_{t=1}^{T}\sum_{b=1}^{B} r_{t,b}$. This models the case where all the queries in a batch should have low regret. The key quantity controlling the regret bounds of all known BO algorithms is the maximum mutual information that can be gained about $f$ from $T$ measurements : $\gamma_T = \max_{A \subseteq \mathcal{X} , |A|\leq T} I(y_A,f_A) = \max_{A \subseteq \mathcal{X} , |A|\leq T} \small{\frac{1}{2}}\normalsize \log\det(I + \sigma^{-2}K_A)$, where $K_A$ is the (square) submatrix of $K$ formed by picking the row and column indices corresponding to the set $A$. The regret for both the UCB and the EST algorithms are presented in the following theorem which is a combination of Theorem 1 in \cite{Srinivas} and Theorem 3.1 in \cite{ZiWang}.
\begin{theorem} Let $C = 2/\log(1+\sigma^{-2})$ and fix $\delta > 0$. For UCB, choose $\beta_t = 2 \log(|\mathcal{X}|t^2\pi^2/6\delta)$ and for EST, choose $\beta_t = (\min_{x \in \mathcal{X}}\small\frac{\hat{m}-\mu_{t-1}(x)}{\sigma_{t-1}(x)}\normalsize)^2$ and $\zeta_t = 2 \log(\pi^2 t^2/\delta)$. With probability $1-\delta$, the cumulative regret up to any time step $T$ can be bounded as 
\small
\begin{align*}
R_T = \sum_{t=1}^{T} r_t \leq \begin{cases} \sqrt{CT\beta_T\gamma_T} & \mbox{ for UCB}\\
\sqrt{CT\gamma_T}(\beta_{t^*}^{1/2}+\zeta_T^{1/2}) & \mbox{ for EST}
\end{cases} & \mbox{ where } t^{*} = \arg\max_t \beta_t.
\end{align*}
\normalsize
 
\end{theorem}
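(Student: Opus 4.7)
The theorem bundles two known results (GP-UCB from Srinivas et al.\ and EST from Wang et al.), so my plan is to reconstruct the standard two-step scheme: (i) a high-probability confidence-bound argument to upper-bound each instantaneous regret $r_t$ by a multiple of the posterior standard deviation $\sigma_{t-1}(x_t)$, and (ii) a Cauchy--Schwarz plus information-gain argument to convert $\sum_t \sigma_{t-1}^2(x_t)$ into $\gamma_T$.

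For the first step, I would begin with the Gaussian tail bound $\Pr(|Z|>c) \leq e^{-c^2/2}$ applied to $Z = (f(x)-\mu_{t-1}(x))/\sigma_{t-1}(x)$. A union bound over $x \in \mathcal{X}$ and $t \geq 1$, with the series $\sum_t 1/t^2 = \pi^2/6$, gives: with probability $\geq 1-\delta$, for all $x,t$,
\begin{equation*}
|f(x) - \mu_{t-1}(x)| \leq \beta_t^{1/2} \sigma_{t-1}(x) \qquad \text{(UCB case)},
\end{equation*}
and similarly $f(x) \leq \mu_{t-1}(x) + \zeta_t^{1/2}\sigma_{t-1}(x)$ for the EST case with $\zeta_t = 2\log(\pi^2 t^2/\delta)$ (here no union over $\mathcal{X}$ is needed because EST's argument is pointwise at $x^*$ and $x_t$). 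Conditioning on this good event:

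\textbf{UCB branch.} By the UCB selection rule $x_t = \arg\max_x f^+_t(x)$, the confidence bound yields $f(x^*) \leq \mu_{t-1}(x^*) + \beta_t^{1/2}\sigma_{t-1}(x^*) \leq \mu_{t-1}(x_t) + \beta_t^{1/2}\sigma_{t-1}(x_t)$, and another application gives $\mu_{t-1}(x_t) - f(x_t) \leq \beta_t^{1/2}\sigma_{t-1}(x_t)$. Chaining these, $r_t \leq 2\beta_t^{1/2}\sigma_{t-1}(x_t)$.

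\textbf{EST branch.} By the EST selection rule and the definition of $\beta_t$, $(\hat{m}-\mu_{t-1}(x_t))/\sigma_{t-1}(x_t) \leq \beta_t^{1/2} \leq \beta_{t^*}^{1/2}$. Using that $\hat m \geq m$ (the assumption carried over from Wang et al.) and the confidence bound at $x_t$ with parameter $\zeta_t$, I split $r_t = m - f(x_t) \leq \hat m - f(x_t) = (\hat m - \mu_{t-1}(x_t)) + (\mu_{t-1}(x_t) - f(x_t)) \leq (\beta_{t^*}^{1/2}+\zeta_T^{1/2})\sigma_{t-1}(x_t)$.

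For the second step, both cases reduce to bounding $\sum_{t=1}^T \sigma_{t-1}^2(x_t)$ in terms of $\gamma_T$. The key inequality, which I would invoke from Srinivas et al., is $\sigma_{t-1}^2(x_t) \leq \sigma^2 \cdot \tfrac{\sigma^{-2}\sigma_{t-1}^2(x_t)}{\log(1+\sigma^{-2})} \cdot \log(1+\sigma^{-2}\sigma_{t-1}^2(x_t))$ (concavity-of-log trick valid when $\sigma_{t-1}^2(x_t) \leq \sigma^2$, which holds under standard normalization $k(x,x)\leq 1$). Summing and using the chain rule for mutual information, $\tfrac{1}{2}\sum_t \log(1+\sigma^{-2}\sigma_{t-1}^2(x_t)) = I(y_{1:T};f_{1:T}) \leq \gamma_T$, gives $\sum_t \sigma_{t-1}^2(x_t) \leq C\gamma_T$ with $C=2/\log(1+\sigma^{-2})$. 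Cauchy--Schwarz then yields $\sum_t r_t \leq \sqrt{T\sum_t r_t^2}$, which in the UCB case is $\leq \sqrt{4T\beta_T \cdot C\gamma_T}$ (absorbing the constant into $C$ as is conventional), and in the EST case is $\leq (\beta_{t^*}^{1/2}+\zeta_T^{1/2})\sqrt{CT\gamma_T}$.

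The main obstacle, and the step where care is genuinely required, is the information-gain reduction: the inequality $\sigma_{t-1}^2(x_t) \leq C'\log(1+\sigma^{-2}\sigma_{t-1}^2(x_t))$ is only immediate under a bounded-kernel assumption, and linking the sum of per-step log-terms to $\gamma_T$ relies on the determinant-based form $\gamma_T = \tfrac12\max_{|A|\leq T}\log\det(I+\sigma^{-2}K_A)$ together with the fact that the sequence $(x_1,\ldots,x_T)$ the algorithm actually picks is one admissible choice of $A$. The confidence-bound step is routine once the union bound is set up correctly; the EST case only differs from UCB in how the selection rule converts the confidence bound into a per-step regret, which is a one-line calculation using $\hat m \geq m$.
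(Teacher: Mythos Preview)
The paper does not supply its own proof of this statement; it simply cites it as the combination of Theorem~1 in Srinivas et al.\ and Theorem~3.1 in Wang et al. Your reconstruction is exactly the standard two-step argument from those sources (confidence bounds via Gaussian tails and union bound, then Cauchy--Schwarz plus the log-determinant information-gain identity), so there is nothing to compare against and your plan is sound.

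One small slip worth fixing: the key inequality you wrote,
\[
\sigma_{t-1}^2(x_t) \leq \sigma^2 \cdot \tfrac{\sigma^{-2}\sigma_{t-1}^2(x_t)}{\log(1+\sigma^{-2})} \cdot \log(1+\sigma^{-2}\sigma_{t-1}^2(x_t)),
\]
has a spurious extra factor of $\sigma_{t-1}^2(x_t)$ on the right and is false as stated. The correct version (Lemma~5.4 in Srinivas et al.) is
\[
\sigma_{t-1}^2(x_t) \leq \tfrac{1}{\log(1+\sigma^{-2})}\,\log\bigl(1+\sigma^{-2}\sigma_{t-1}^2(x_t)\bigr),
\]
which follows from monotonicity of $s/\log(1+s)$ on $[0,\sigma^{-2}]$ together with $\sigma_{t-1}^2(x_t)\leq k(x_t,x_t)\leq 1$ (not $\leq \sigma^2$). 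With that correction your second step goes through, though note the UCB bound you obtain is $\sqrt{4CT\beta_T\gamma_T}$; the factor $4$ is indeed present in Srinivas et al.\ and the paper's statement has simply absorbed it.
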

\subsection{Determinantal Point Processes}
Given a DPP kernel $K \in \R^{m \times m}$ of $m$ elements $\{1,\ldots,m\}$, the $k$-DPP distribution defined on $2^\mathcal{Y}$ is defined as picking $B$, a $k$-subset of $[m]$ with probability proportional to $\det(K_{B})$. Formally,
\small
\begin{align*}
  \mathsf{Pr}(B) = \frac{\det(K_{B})}{\sum_{|S|=k}\det(K_{S})}
\end{align*}
\normalsize
The problem of picking a set of size $k$ which maximizes the determinant and sampling a set according to the $k$-DPP distribution has received considerable attention~\cite{Nikolov15,CivrilM2013,CivrilM2009,Deshpande2010,DPPMCMC,Kulesza2011}. The maximization problem in general is \textsf{NP}-hard and furthermore, has a hardness of approximation result of $1/c^k$ for some $c > 1$. The best known approximation algorithm is by \cite{Nikolov15} with a factor of $1/e^k$, which almost matches the lower bound. Their algorithm however is a complicated and expensive convex program. A simple greedy algorithm on the other hand gives a $1/2^{k\log(k)}$-approximation. For sampling from $k$-DPPs, an exact sampling algorithm exists due to \cite{Deshpande2010}. This, however, does not scale to large datasets. A recently proposed alternative is an MCMC based method by \cite{DPPMCMC} which is much faster.
\section{Main Results}
In this section, we present our DPP-based algorithms. For a fair comparison of the various methods, we first prove the regret bounds of the EST version of BUCB, i.e., B-EST. We then show the equivalence between UCB-PE and UCB-DPP maximization along with showing regret bounds for the EST version of PE/DPP-MAX. We then present the DPP sampling (DPP-SAMPLE) based methods for UCB and EST and provide regret bounds. In Appendix 4, while borrowing ideas from \cite{Seeger}, we provide a simpler proof with improved bounds on the maximum information gain for the RBF kernel.
\subsection{The Batched-EST algorithm}
\begin{algorithm}[!tb]
   \caption{GP-BUCB/B-EST Algorithm}
   \label{alg:bucbalgo}
\begin{algorithmic}
   \STATE {\bfseries Input:} Decision set $\mathcal{X}$, GP prior $\mu_0, \sigma_0$, kernel function $k(\cdot,\cdot)$, feedback mapping $fb[\cdot]$
   \FOR{$t=1$ {\bfseries to} TB}
   \STATE Choose $\beta_t^{1/2} = \begin{cases} C' \big[2 \log({|\mathcal{X}|\pi^2 t^2/6)\delta}\big]&\mbox{ for BUCB}\\ C' \big[\min_{x \in \mathcal{X}}(\hat{m}-\mu_{fb[t]})/\sigma_{t-1}(x)\big] & \mbox{ for B-EST} \end{cases}$\\
   \STATE Choose $x_t = \arg\max_{x\in\mathcal{X}}[\mu_{fb[t]}(x) + \beta_t^{1/2}\sigma_{t-1}(x)]$ and compute $\sigma_t(\cdot)$\\
   \IF{$fb[t]<fb[t+1]$}
   \STATE Obtain $y_{t'} = f(x_{t'}) + \epsilon_{t'}$ for $t' \in \{fb[t]+1,\ldots,fb[t+1]\}$ and compute $\mu_{fb[t+1]}(\cdot)$\\
   \ENDIF
   \ENDFOR
   \STATE \textbf{return} $\arg\max\limits_{t=1 \ldots TB} y_t$
\end{algorithmic}
\end{algorithm}
The BUCB has a feedback mapping $fb$ which indicates that at any given time $t$ (just in this case we will mean a total of $TB$ timesteps), the iteration upto which the actual function values are available. In the batched setting, this is just $\lfloor{(t-1)/B}\rfloor B$. The BUCB and B-EST, its EST variant algorithms are presented in Algorithm \ref{alg:bucbalgo}. The algorithm mainly comes from the observation made in \cite{ZiWang} that the point chosen by EST is the same as a variant of UCB. This is presented in the following lemma.
\begin{lemma}\label{UCBESTEquivLemma}(Lemma 2.1 in \cite{ZiWang}) At any timestep $t$, the point selected by EST is the same as the point selected by a variant of UCB with $\beta_t^{1/2} = \min_{x \in \mathcal{X}} (\hat{m} - \mu_{t-1}(x))/\sigma_{t-1}(x)$.
\end{lemma}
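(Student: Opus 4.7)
The plan is to show directly, by algebraic manipulation, that the EST minimizer coincides with the argmax of the UCB variant for the prescribed choice of $\beta_t^{1/2}$. The key observation is that with this specific choice, $\hat m$ acts as a uniform upper bound on the UCB score $\mu_{t-1}(x) + \beta_t^{1/2}\sigma_{t-1}(x)$ across all $x$, and the bound is tight precisely at the EST-selected point.

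First I would let $x_{\text{EST}} = \arg\min_{x \in \mathcal{X}} (\hat m - \mu_{t-1}(x))/\sigma_{t-1}(x)$, so that by definition $\beta_t^{1/2} = (\hat m - \mu_{t-1}(x_{\text{EST}}))/\sigma_{t-1}(x_{\text{EST}})$, which rearranges to
\begin{equation*}
\mu_{t-1}(x_{\text{EST}}) + \beta_t^{1/2}\sigma_{t-1}(x_{\text{EST}}) = \hat m.
\end{equation*}
Next, for an arbitrary $x \in \mathcal{X}$, the minimality of $x_{\text{EST}}$ yields $(\hat m - \mu_{t-1}(x))/\sigma_{t-1}(x) \geq \beta_t^{1/2}$. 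Multiplying both sides by $\sigma_{t-1}(x) > 0$ (which we may assume holds on the unobserved part of the domain; otherwise the point is already known and can be excluded from consideration) and rearranging gives
\begin{equation*}
\mu_{t-1}(x) + \beta_t^{1/2}\sigma_{t-1}(x) \leq \hat m,
\end{equation*}
with equality iff $x$ is also a minimizer of the EST criterion. Combining this inequality with the equation for $x_{\text{EST}}$ shows that $x_{\text{EST}}$ attains the maximum of the UCB score $\mu_{t-1}(x) + \beta_t^{1/2}\sigma_{t-1}(x)$, so the UCB variant selects the same point (breaking ties consistently when multiple minimizers exist).

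There is essentially no technical obstacle here; the only subtlety worth flagging is the handling of potential ties in the argmin/argmax and the positivity of $\sigma_{t-1}(x)$, both of which are benign and can be dispensed with by a tie-breaking convention or by restricting attention to the candidate set of points with nonzero posterior variance. The content of the lemma is really a rewriting of the EST objective as a degenerate UCB with the data-dependent exploration weight $\beta_t^{1/2}$, and the proof is essentially one line once this dual interpretation is spelled out.
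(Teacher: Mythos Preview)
Your argument is correct. The paper does not actually supply its own proof of this lemma; it simply quotes the result as Lemma~2.1 of \cite{ZiWang} and moves on. Your derivation is the standard one: with $\beta_t^{1/2}$ set to the minimum of $(\hat m-\mu_{t-1}(x))/\sigma_{t-1}(x)$, the UCB score $\mu_{t-1}(x)+\beta_t^{1/2}\sigma_{t-1}(x)$ is bounded above by $\hat m$ for every $x$, with equality exactly at the EST minimizer, so the two selection rules coincide. The caveats you note about $\sigma_{t-1}(x)>0$ and tie-breaking are appropriate and do not affect the substance.
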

This will be sufficient to get to B-EST as well by just running BUCB with the $\beta_t$ as defined in Lemma \ref{UCBESTEquivLemma} and is also provided in Algorithm \ref{alg:bucbalgo}. In the algorithm, $C'$ is chosen to be $exp(2C)$, where $C$ is an upper bound on the maximum conditional mutual information $I(f(x); y_{fb[t]+1:t-1}|y_{1:fb[t]})$ (refer to \cite{DesautelsBUCB} for details). The problem with naively using this algorithm is that the value of $C'$, and correspondingly the regret bounds, usually has at least linear growth in $B$. This is corrected in \cite{DesautelsBUCB} by two-stage BUCB which first chooses an initial batch of size $T^{init}$ by greedily choosing points based on the (updated) posterior variances. The values are then obtained and the posterior GP is calculated which is used as the prior GP in Algorithm \ref{alg:bucbalgo}. The $C'$ value can then be chosen independent of $B$. We refer the reader to the Table 1 in \cite{DesautelsBUCB} for values of $C'$ and $T^{init}$ for common kernels. Finally, the regret bounds of B-EST are presented in the next theorem.

\begin{theorem}\label{BESTThm}
Choose $\alpha_t = \big(\min_{x \in \mathcal{X}} \small \frac{\hat{m} - \mu_{fb[t]}(x)}{\sigma_{t-1}(x)}\big)^2$ and $\beta_t = (C')^2 \alpha_{t}$, $B \geq 2, \delta > 0$ and the $C'$ and $T^{init}$ values are chosen according to Table 1 in \cite{DesautelsBUCB}. At any timestep $T$, let $R_T$ be the cumulative regret of the two-stage initialized B-EST algorithm. Then
\begin{align*}
Pr\{R_T \leq C' R_T^{seq} + 2 \|f\|_{\infty}T^{init}, \forall T \geq 1\}\geq 1-\delta
\end{align*}
\end{theorem}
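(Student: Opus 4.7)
The plan is to mimic the analysis of BUCB in \cite{DesautelsBUCB}, substituting EST-style confidence parameters via Lemma \ref{UCBESTEquivLemma}. Observe that by Lemma \ref{UCBESTEquivLemma}, choosing the point $x_t$ that maximizes $\mu_{fb[t]}(x) + C'\alpha_t^{1/2}\sigma_{t-1}(x)$ is, after peeling off the $C'$ factor, equivalent to running an ``EST-with-hallucinated-mean'' selection rule against the posterior defined through the last observed batch. This lets us recycle the BUCB regret machinery almost verbatim, with EST's $\alpha_t$ in place of UCB's $\log(|\mathcal{X}|\pi^2 t^2/6\delta)$.

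First, I would argue that during the initial pure-exploration phase of length $T^{init}$ the regret is trivially bounded by $2\|f\|_\infty T^{init}$, since any function value lies in $[-\|f\|_\infty,\|f\|_\infty]$. This is exactly the first of the two terms in the bound, and from here on one only needs to control the post-initialization regret. The purpose of $T^{init}$, as in \cite{DesautelsBUCB}, is to guarantee that afterwards the conditional information gain $I(f(x);y_{fb[t]+1:t-1}\mid y_{1:fb[t]})$ is uniformly bounded by a constant $C$ that does \emph{not} grow with $B$, which is what lets us set $C'=\exp(2C)$ from the table in \cite{DesautelsBUCB}.

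Next, I would establish valid confidence bounds. The standard GP concentration argument (Theorem 6 in \cite{Srinivas}) guarantees, with probability at least $1-\delta$ and simultaneously for all $t$ and $x$, that $|f(x) - \mu_{t-1}(x)| \leq \alpha_t^{1/2} \sigma_{t-1}(x)$ in the sequential setting. In the batched setting, we only have access to $\mu_{fb[t]}$ and $\sigma_{t-1}$, so we need to translate this bound. The key lemma, shown in \cite{DesautelsBUCB} via the chain rule for mutual information, is $\sigma_{fb[t]}(x)/\sigma_{t-1}(x)\leq \exp(C) = (C')^{1/2}$ for all $t$ past initialization. Combined with the observation that $\mu_{fb[t]}(x)$ replaces $\mu_{t-1}(x)$ correctly (the hallucinated values shift nothing in expectation because selection depends only on variances), this yields $|f(x) - \mu_{fb[t]}(x)| \leq C' \alpha_t^{1/2} \sigma_{t-1}(x) = \beta_t^{1/2}\sigma_{t-1}(x)$ with probability $1-\delta$.

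Given valid confidence intervals, the standard two-line UCB regret telescoping applies: the instantaneous regret $r_t = f(x^*)-f(x_t)$ satisfies $r_t \leq 2\beta_t^{1/2}\sigma_{t-1}(x_t) = 2 C'\alpha_t^{1/2}\sigma_{t-1}(x_t)$, because by the selection rule $\mu_{fb[t]}(x_t)+\beta_t^{1/2}\sigma_{t-1}(x_t)\geq \mu_{fb[t]}(x^*)+\beta_t^{1/2}\sigma_{t-1}(x^*)\geq f(x^*)$. The sequential EST analysis of \cite{ZiWang} bounds $\sum_t 2\alpha_t^{1/2}\sigma_{t-1}(x_t)$ by $R_T^{seq}$, so after summing we obtain the post-initialization contribution $C' R_T^{seq}$. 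Adding the initialization term $2\|f\|_\infty T^{init}$ gives the claimed bound. The main obstacle in filling in the details is verifying that the information-theoretic bound on $\sigma_{fb[t]}/\sigma_{t-1}$ from \cite{DesautelsBUCB} goes through unchanged for the EST $\alpha_t$, since EST's $\alpha_t$ depends on $\mu_{fb[t]}$ rather than being deterministic in $t$; this requires checking that the concentration inequality that underlies the sequential EST bound in \cite{ZiWang} only uses the deterministic $\zeta_t$ part and so transports cleanly to the batched setting.
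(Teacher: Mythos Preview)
Your proposal is correct and follows essentially the same route as the paper: split the cumulative regret into the initialization phase (bounded trivially by $2\|f\|_\infty T^{init}$) and the post-initialization phase, then invoke the BUCB variance-ratio machinery from \cite{DesautelsBUCB} with EST's sequential regret bound from \cite{ZiWang} in place of UCB's. The paper's version is in fact terser than yours—it simply cites Lemma~12 of \cite{DesautelsBUCB} and says ``use the EST sequential bounds instead''—so your more explicit unpacking of the $\sigma_{fb[t]}/\sigma_{t-1}\le \exp(C)$ step and the confidence-interval transfer is a faithful elaboration rather than a different argument. Your closing worry about $\alpha_t$ depending on $\mu_{fb[t]}$ is not a real obstacle: the variance-ratio bound is purely information-theoretic and independent of the confidence parameter, while the concentration event uses only the deterministic $\zeta_t$, exactly as you suspected.
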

\begin{proof} The proof is presented in Appendix 1.
\end{proof}
\subsection{Equivalence of Pure Exploration (PE) and DPP Maximization}
We now present the equivalence between the Pure Exploration and a procedure which involves DPP maximization based on the Greedy algorithm. For the next two sections, by an iteration, we mean all $B$ points selected in that iteration and thus, $\mu_{t-1}$ and $k_{t-1}$ are computed using $(t-1)B$ observations that are available to us.
We first describe a generic framework for BBO inspired by UCB-PE : At any iteration, the first point is chosen by selecting the one which maximizes UCB or EST which can be seen as a variant of UCB as per Lemma \ref{UCBESTEquivLemma}. A relevance region $\mathcal{R}_t^{+}$ is defined which contains $\arg\max_{x \in \mathcal{X}}f_{t+1}^+(x)$ with high probability. Let $y_t^{\bullet} = f_t^{-}(x_t^\bullet)$, where $x_t^\bullet = \arg\max_{x \in \mathcal{X}}f_t^-(x)$. The relevance region is formally defined as $\mathcal{R}_t^+ = \{x \in \mathcal{X} | \mu_{t-1}+2\sqrt{\beta_{t+1}}\sigma_{t-1}(x) \geq y_t^\bullet\}$. The intuition for considering this region is that using $\mathcal{R}_t^+$ guarantees that the queries at iteration $t$ will leave an impact on the future choices at iteration $t+1$. The next $B-1$ points for the batch are then chosen from $\mathcal{R}_t^+$, according to some rule. In the special case of UCB-PE, the $B-1$ points are selected greedily from $\mathcal{R}_t^+$ by maximizing the (updated) posterior variance, while keeping the mean function the same.
Now, at the $t^{th}$ iteration, consider the posterior kernel function after $x_{t,1}$ has been chosen (say $k_{t,1}$) and consider the kernel matrix $K_{t,1} = I + \sigma^{-2}[k_{t,1}(p_i,p_j)]_{i,j}$ over the points $p_i \in \mathcal{R}_t^+$. We will consider this as our DPP kernel at iteration $t$. Two possible ways of choosing $B-1$ points via this DPP kernel is to either choose the subset of size $B-1$ of maximum determinant (DPP-MAX) or sample a set from a $(B-1)$-DPP using this kernel (DPP-SAMPLE). In this subsection, we focus on the maximization problem.
The proof of the regret bounds of UCB-PE go through a few steps but in one of the intermediate steps (Lemma 5 of \cite{ContalUCBPE}), it is shown that the sum of regrets over a batch at an iteration $t$ is upper bounded as 
\small
\begin{align*}
\sum_{b=1}^{B} r_{t,b} \leq \sum_{b=1}^{B}(\sigma_{t,b}(x_{t,b}))^2 \leq \sum_{b=1}^{B} C_2 \sigma^2\log(1+\sigma^{-2}\sigma_{t,b}(x_{t,b})) = C_2 \sigma^2\log\bigg[\prod_{b=1}^{B} (1+\sigma^{-2}\sigma_{t,b}(x_{t,b})\bigg]
\end{align*} 
\normalsize
where $C_2 = \sigma^{-2}/\log(1+\sigma^{-2})$. From the final log-product term, it can be seen (from Schur's determinant identity \cite{SchurIdentiy} and the definition of \small$\sigma_{t,b}(x_{t,b})$\normalsize) that the product of the last $B-1$ terms is exactly the $B-1$ principal minor of $K_{t,1}$ formed by the indices corresponding to $S=\{x_{t,b}\}_{b=2}^{B}$. Thus, it is straightforward to see that the UCB-PE algorithm is really just $(B-1)$-DPP maximization via the greedy algorithm. This connection will also be useful in the next subsection for DPP-SAMPLE. Thus, \small$\sum_{b=1}^B r_{t,b}\leq C_2 \sigma^2 \bigg[\log(1+\sigma^{-2}\sigma_{t,1}(x_{t,1})) + \log\det((K_{t,1})_{S})\bigg]$\normalsize.   Finally, for EST-PE, the proof proceeds like in the B-EST case by realising that EST is just UCB with an adaptive $\beta_t$. The final algorithm (along with its sampling counterpart; details in the next subsection) is presented in Algorithm \ref{alg:dppalgo}. The procedure \small\textsf{kDPPMaxGreedy}$(K,k) $\normalsize \ picks a principal submatrix of $K$ of size $k$ by the greedy algorithm.
\small
\begin{algorithm}[!tb]
   \caption{GP-(UCB/EST)-DPP-(MAX/SAMPLE) Algorithm}
   \label{alg:dppalgo}
\begin{algorithmic}
   \STATE {\bfseries Input:} Decision set $\mathcal{X}$, GP prior $\mu_0, \sigma_0$, kernel function $k(\cdot,\cdot)$
   \FOR{$t=1$ {\bfseries to} T}
   \STATE Compute $\mu_{t-1}$ and $\sigma_{t-1}$ according to Bayesian inference.\\
   \STATE Choose $\beta_t^{1/2} = \begin{cases}  \big[2 \log({|\mathcal{X}|\pi^2 t^2/6)\delta}\big]&\mbox{ for UCB}\\ \big[\min_{x \in \mathcal{X}}(\hat{m}-\mu_{fb[t]})/\sigma_{t-1}(x)\big] & \mbox{ for EST} \end{cases}$\\
   \STATE $x_{t,1} \leftarrow \arg\max_{x \in \mathcal{X}} \mu_{t-1}(x) + \sqrt{\beta_t}\sigma_{t-1}(x)$\\
   \STATE Compute $\mathcal{R}_t^+$ and construct the DPP kernel $K_{t,1}$\\
   \STATE $\{x_{t,b}\}_{b=2}^{B} \leftarrow \begin{cases} \mathsf{kDPPMaxGreedy}(K_{t,1},B-1)&\mbox{ for DPP-MAX}\\  \mathsf{kDPPSample}(K_{t,1},B-1) & \mbox{ for DPP-SAMPLE} \end{cases}$
   \STATE Obtain $y_{t,b}=f(x_{t,b})+\epsilon_{t,b}$ for $b=1,\ldots,B$
   \ENDFOR
\end{algorithmic}
\end{algorithm}
\normalsize
Finally, we have the theorem for the regret bounds for (UCB/EST)-DPP-MAX.
\begin{theorem}\label{DPPMAXThm} At iteration $t$, let $\beta_t = 2\log(|\mathcal{X}|\pi^2t^2/6\delta)$ for UCB, $\beta_t = (\min\frac{\hat{m}-\mu_{t-1}(x)}{\sigma_{t-1}(x)})^2$ and $\zeta_t = 2\log(\pi^2t^2/3\delta)$ for EST, $C_1 = 36/\log(1+\sigma^{-2})$ and fix $\delta>0$, then, with probability $\geq 1 - \delta$ the full cumulative regret $R_{TB}$ incurred by UCB-DPP-MAX is $R_{TB} \leq \sqrt{C_1 TB\beta_T \gamma_{TB}}\}$ and that for EST-DPP-MAX is $R_{TB} \leq \sqrt{C_1 TB\gamma_{TB}}(\beta^{1/2}_{t^*}+\zeta_T^{1/2})$.
\end{theorem}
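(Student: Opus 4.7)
The plan is to leverage the equivalence, highlighted in the paragraph above the theorem, between greedy $k$-DPP maximization with kernel $K_{t,1}$ and the pure-exploration step in UCB-PE. Because Algorithm \ref{alg:dppalgo} uses \textsf{kDPPMaxGreedy}, UCB-DPP-MAX is literally the UCB-PE algorithm of Contal et al., and the UCB bound follows by repeating their regret analysis; EST-DPP-MAX is then obtained by the same argument applied to the EST-as-UCB variant from Lemma \ref{UCBESTEquivLemma}. I will go through the skeleton for the UCB case and then indicate the single step where EST enters.

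First, I set up confidence events. For UCB with $\beta_t = 2\log(|\mathcal{X}|\pi^2 t^2/6\delta)$, a union bound over $\mathcal{X}$ and $t$ applied to standard Gaussian tails gives
\begin{equation*}
\Pr\{|f(x)-\mu_{t-1}(x)| \leq \beta_t^{1/2}\sigma_{t-1}(x) \; \forall x \in \mathcal{X}, \forall t \geq 1\} \geq 1-\delta.
\end{equation*}
Conditioning on this event, the first point of every batch satisfies the standard UCB bound $r_{t,1} \leq 2\beta_t^{1/2}\sigma_{t-1}(x_{t,1})$. For $b\geq 2$, since $x_{t,b}\in\mathcal{R}_t^+$ and $y_t^\bullet \leq m$ under the confidence event, a short chain of inequalities using the definition of $\mathcal{R}_t^+$ (which uses a radius $2\beta_{t+1}^{1/2}$) yields the per-point bound $r_{t,b}\leq 6\beta_t^{1/2}\sigma_{t,b-1}(x_{t,b})$; this is where the eventual constant $C_1 = 36/\log(1+\sigma^{-2})$ arises.

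The next step converts these pointwise bounds into log-determinants. Squaring and summing within a batch,
\begin{equation*}
\sum_{b=1}^B r_{t,b}^2 \;\leq\; 36\,\beta_t \sum_{b=1}^{B} \sigma_{t,b-1}^2(x_{t,b}),
\end{equation*}
and by the standard $u \leq \sigma^{-2}/\log(1+\sigma^{-2}) \cdot \log(1+\sigma^{-2}u)$ trick together with Schur's determinant identity (as sketched in the paragraph preceding the theorem), the inner sum equals $\frac{1}{\log(1+\sigma^{-2})}\log\det(I+\sigma^{-2}K_{\text{batch},t})$ where $K_{\text{batch},t}$ is the kernel matrix over the batch. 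Telescoping across $t=1,\ldots,T$ collapses these into $\log\det(I+\sigma^{-2}K_{TB}) \leq 2\gamma_{TB}$. Applying Cauchy--Schwarz,
\begin{equation*}
R_{TB} \;=\; \sum_{t=1}^T\sum_{b=1}^B r_{t,b} \;\leq\; \sqrt{TB \sum_{t,b} r_{t,b}^2} \;\leq\; \sqrt{C_1\, TB\, \beta_T\, \gamma_{TB}},
\end{equation*}
which is the UCB bound. It is important that, in this chain, we only ever use that the batch was chosen so that the log-det of $K_{\text{batch},t}$ is controlled; greedy DPP-MAX achieves exactly the sequence of posterior variances that UCB-PE would, so no approximation loss enters.

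For the EST case, Lemma \ref{UCBESTEquivLemma} lets me reinterpret the first point $x_{t,1}$ as a UCB selection with the adaptive parameter $\alpha_t = (\min_x(\hat m - \mu_{t-1}(x))/\sigma_{t-1}(x))^2$. The EST confidence event from Wang \& Jegelka replaces the UCB radius $\beta_t^{1/2}\sigma_{t-1}(x)$ by $(\beta_t^{1/2} + \zeta_t^{1/2})\sigma_{t-1}(x)$ with probability $\geq 1-\delta$ under their choice $\zeta_t = 2\log(\pi^2 t^2/3\delta)$. Repeating Steps 2--4 verbatim with this radius, and using monotonicity to replace $\beta_t^{1/2}$ by $\beta_{t^*}^{1/2}$ before pulling it outside the Cauchy--Schwarz, gives
\begin{equation*}
R_{TB} \leq \sqrt{C_1 \, TB\, \gamma_{TB}}\,(\beta_{t^*}^{1/2} + \zeta_T^{1/2}).
\end{equation*}

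The main obstacle is the per-point regret bound for $b\geq 2$: one has to combine membership in $\mathcal{R}_t^+$, the confidence event, and the fact that $\sigma_{t,b-1}(x)\leq \sigma_{t-1}(x)$ (information monotonicity) carefully enough to get the cleanest constants, and then verify that the same chain carries through when $\beta_t$ is the adaptive EST value rather than the deterministic UCB schedule. Everything else is a straightforward replay of the Contal et al. analysis, now transparent through the DPP-MAX viewpoint.
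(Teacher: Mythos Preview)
Your proposal is correct and follows essentially the same route as the paper's proof: bound each instantaneous regret $r_{t,b}$ by a constant times the corresponding posterior standard deviation (using the UCB confidence event for $b=1$ and the relevance-region argument for $b\geq 2$), apply Cauchy--Schwarz over the $TB$ terms, and control $\sum_{t,b}\sigma_{t-1,b}^2(x_{t,b})$ by $\gamma_{TB}$. The paper packages the last step as a cited lemma (Lemma~4 of Contal et~al., restated as Lemma~\ref{sumVarsGamma}) rather than writing out the log-det telescoping you describe, and it records the auxiliary facts $\sigma_{t,1}(x_{t+1,1})\leq\sigma_{t-1,b}(x_{t,b})$ (Lemma~\ref{nextBoundingLemma}) and its summed version (Lemma~\ref{sumBoundingTelescopic}) as separate lemmas, but the overall skeleton and the EST modification via Lemma~\ref{UCBESTEquivLemma} are the same.
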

\begin{proof} The proof is provided in Appendix 2. It should be noted that the term inside the logarithm in $\zeta_t$ has been multiplied by 2 as compared to the sequential EST, which has a union bound over just one point, $x_{t}$. This happens because we will need a union bound over not just $x_{t,b}$ but also $x_t^\bullet$.
\end{proof}
\subsection{Batch Bayesian Optimization via DPP Sampling}
In the previous subsection, we looked at the regret bounds achieved by DPP maximization. One natural question to ask is whether the other subset selection method via DPPs, namely DPP sampling, gives us equivalent or better regret bounds.  Note that in this case, the regret would have to be defined as expected regret. The reason to believe this is well-founded as indeed sampling from $k$-DPPs results in better results, in both theory and practice, for low-rank matrix approximation \cite{Deshpande2010} and exemplar-selection for Nystrom methods~\cite{NystromJegelka}. Keeping in line with the framework described in the previous subsection, the subset to be selected has to be of size $B-1$ and the kernel should be $K_{t,1}$ at any iteration $t$. Instead of maximizing, we can choose to sample from a $(B-1)$-DPP. The algorithm is described in Algorithm \ref{alg:dppalgo}. The $\mathsf{kDPPSample(K,k)}$ procedure denotes sampling a set from the $k$-DPP distribution with kernel $K$. The question then to ask is what is the expected regret of this procedure. In this subsection, we show that the expected regret bounds of DPP-SAMPLE are less than the regret bounds of DPP-MAX and give a quantitative bound on this regret based on entropy of DPPs. By entropy of a $k$-DPP with kernel $K$, $H(k-\mbox{DPP}(K))$, we simply mean the standard definition of entropy for a discrete distribution. Note that the entropy is always non-negative in this case. Please see Appendix 3 for details. For brevity, since we always choose $B-1$ elements from the DPP, we denote $H(DPP(K))$
 to be the entropy of $(B-1)$-DPP for kernel $K$.
 \begin{theorem}\label{DPPSAMPLEThm} The regret bounds of DPP-SAMPLE are less than that of DPP-MAX. Furthermore, at iteration $t$, let $\beta_t = 2\log(|\mathcal{X}|\pi^2t^2/6\delta)$ for UCB, $\beta_t = (\min\frac{\hat{m}-\mu_{t-1}(x)}{\sigma_{t-1}(x)})^2$ and $\zeta_t = 2\log(\pi^2t^2/3\delta)$ for EST, $C_1 = 36/\log(1+\sigma^{-2})$ and fix $\delta>0$, then the expected full cumulative regret of UCB-DPP-SAMPLE satisfies 
 \small
\begin{align*}
R_{TB}^2 \leq 2 TB C_1 \beta_T\bigg[\gamma_{TB} - \sum_{t=1}^{T} H(DPP(K_{t,1})) + B\log(|\mathcal{X}|) \bigg]\end{align*} \normalsize
and that for EST-DPP-SAMPLE satisfies \small \begin{align*}R_{TB}^2 \leq 2 TB C_1 (\beta_t^{1/2}+\zeta_t^{1/2})^2\bigg[ \gamma_{TB} - \sum_{t=1}^{T} H(DPP(K_{t,1})) + B \log(|\mathcal{X}|)\bigg]
\end{align*}\normalsize
\end{theorem}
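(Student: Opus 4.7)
My plan is to adapt the proof of Theorem~\ref{DPPMAXThm} by replacing the greedy maximization of the $(B-1)$-subset determinant with an expectation over the $(B-1)$-DPP distribution, then exploit the DPP entropy identity $\mathbb{E}_{S\sim (B-1)\text{-}DPP(K)}[\log\det K_S] = \log Z - H(DPP(K))$, with $Z = \sum_{|S|=B-1}\det K_S$, to quantify the improvement. Running the DPP-MAX analysis up to the Cauchy-Schwarz step gives the pathwise bound
\begin{equation*}
R_{TB}^2 \;\leq\; 2\,TB\, C_1\, \beta_T \sum_{t=1}^T \Big[\log\!\big(1+\sigma^{-2}\sigma_{t-1}^2(x_{t,1})\big) + \log\det\!\big((K_{t,1})_{S_t}\big)\Big],
\end{equation*}
with the EST case replacing $\beta_T$ by $(\beta_{t^*}^{1/2}+\zeta_T^{1/2})^2$ and doubling the argument of $\zeta_T$'s logarithm for the extra union bound over $x_t^\bullet$, exactly as in Theorem~\ref{DPPMAXThm}. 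Because this holds for every realization of the sampled sets, taking expectations just puts $\mathbb{E}[\log\det((K_{t,1})_{S_t})]$ inside the sum.

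For the first claim (the expected DPP-SAMPLE bound is no larger than the DPP-MAX bound), I would use that conditional on the history $\mathcal{H}_{t}$ before iteration~$t$ the kernel $K_{t,1}$ is deterministic (posterior variances are independent of observed values), so
\begin{equation*}
\mathbb{E}_{S_t\sim (B-1)\text{-}DPP(K_{t,1})}\!\big[\log\det((K_{t,1})_{S_t}) \mid \mathcal{H}_t\big] \;\leq\; \max_{|S|=B-1}\log\det((K_{t,1})_S),
\end{equation*}
where the right-hand side is precisely what bounds DPP-MAX's per-iteration contribution. Summing over $t$ and iterating the tower property of conditional expectation gives the first claim.

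For the explicit quantitative bound, I substitute the DPP entropy identity to write $\mathbb{E}[\log\det((K_{t,1})_{S_t}) \mid \mathcal{H}_t] = \log Z_{t,1} - H(DPP(K_{t,1}))$, then control $\sum_t \log Z_{t,1}$. I would use $Z_{t,1} \leq \binom{|\mathcal{X}|}{B-1}\max_{|S|=B-1}\det((K_{t,1})_S)$ combined with the observation that $K_{t,1} \succeq I$ forces $\max_{|S|=B-1}\det((K_{t,1})_S) \leq \det(K_{t,1})$ via Cauchy interlacing of eigenvalues (a clean consequence of $\log\mu_i \geq 0$ for every eigenvalue $\mu_i$ of $K_{t,1}$), so the iteration-$t$ contribution folds back into $\log\det(I+\sigma^{-2}K_{A_t})$ for the full batch $A_t = \{x_{t,1}\}\cup S_t$. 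The standard information-gain inequality $\sum_t \log\det(I+\sigma^{-2}K_{A_t}) \leq 2\gamma_{TB}$ then closes the bound, with the entropy terms carried along as the promised $-\sum_t H(DPP(K_{t,1}))$ correction. The EST form follows by the same $\beta_T$-substitution as in Theorem~\ref{DPPMAXThm}.

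The main obstacle is the last step: the naive bound $\log Z_{t,1} \leq (B-1)\log|\mathcal{X}| + \max_S\log\det((K_{t,1})_S)$ contributes a $T(B-1)\log|\mathcal{X}|$ slack after summing over $T$ iterations, whereas the theorem allows only $B\log|\mathcal{X}|$ total additive slack. Achieving the tighter constant likely requires exploiting that the DPP's effective ground set is the (potentially much smaller) relevance region $\mathcal{R}_t^+$ rather than all of $\mathcal{X}$, and that the combinatorial factor $\log\binom{|\mathcal{X}|}{B-1}$ can be absorbed into a single application of the information-gain inequality rather than accumulated per iteration; this is the part of the argument I would spend the most effort tightening.
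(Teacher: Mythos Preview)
Your overall architecture matches the paper's proof almost step for step: Cauchy--Schwarz on the sum of posterior standard deviations, identify $\sum_{b=2}^{B}\log(1+\sigma^{-2}\sigma_{t-1,b}^2)$ with $\log\det((K_{t,1})_{S_t})$, take expectation, apply the identity $\E_{S}[\log\det K_S]=\log Z-H(\mathrm{DPP}(K))$, and bound $\log Z$ crudely by $(B-1)\log|\mathcal{X}|+\log\max_{S}\det((K_{t,1})_S)$. The paper then simply observes that $\sum_t\log\max_S\det((K_{t,1})_S)$ is the DPP-MAX contribution and is therefore at most $C'_1\gamma_{TB}$; no interlacing argument is used. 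Your attempt to pass through $\det(K_{t,1})$ (the full relevance-region matrix) via interlacing is a detour that does not land on $\log\det(I+\sigma^{-2}K_{A_t})$ for the \emph{selected} batch, so that step should be replaced by the direct appeal to the DPP-MAX bound. The one structural difference is how the first point $x_{t,1}$ is absorbed: the paper proves and uses a squared version of the telescoping lemma (their Lemma~\ref{sqSampLemma}), yielding a clean factor $B/(B-1)\le2$, rather than folding it into a joint determinant.

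Regarding the obstacle you flag: you are right that the naive bound produces a $T(B-1)\log|\mathcal{X}|$ slack, and the paper's proof does \emph{not} improve on this. Its argument terminates with exactly the bound you derived, and the $B\log|\mathcal{X}|$ in the theorem statement appears to be a typographical slip for a term of order $TB\log|\mathcal{X}|$ (note also the paper's own remark that the explicit bound ``may be worse'' than the DPP-MAX bound and that one should take the minimum). So you need not search for a sharper argument here; the paper does not supply one either.
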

\begin{proof} The proof is provided in Appendix 3.
\end{proof}
Note that the regret bounds for both DPP-MAX and DPP-SAMPLE are better than BUCB/B-EST due to the latter having both an additional factor of $B$ in the $\log$ term and a regret multiplier constant $C'$. In fact, for the RBF kernel, $C'$ grows like $e^{d^d}$ which is quite large for even moderate values of $d$.
\section{Experiments}
\begin{figure}[t]\label{fig:synfig}
    \begin{subfigure}[b]{0.47\textwidth}
        \includegraphics[width=\textwidth, height=0.65\textwidth]{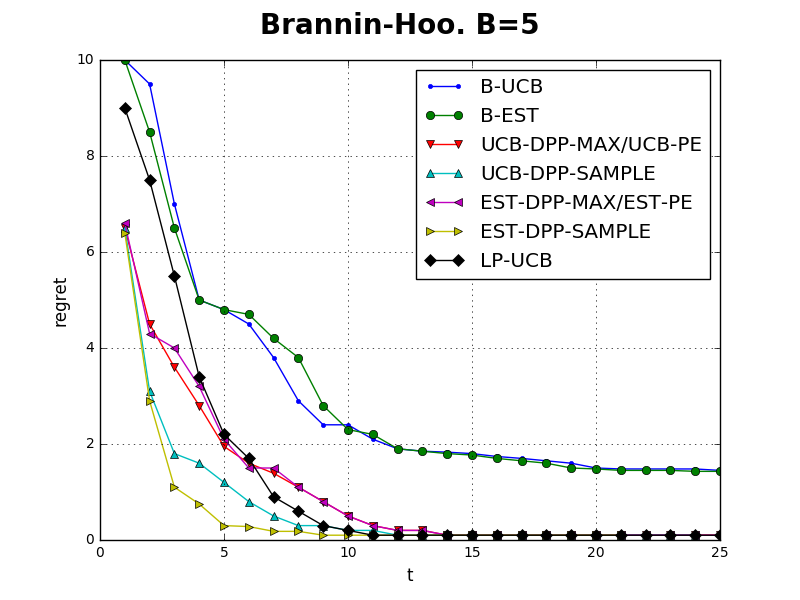}
    \end{subfigure}
    \hspace{0.8mm}
    \begin{subfigure}[b]{0.47\textwidth}
        \includegraphics[width=\textwidth, height=0.65\textwidth]{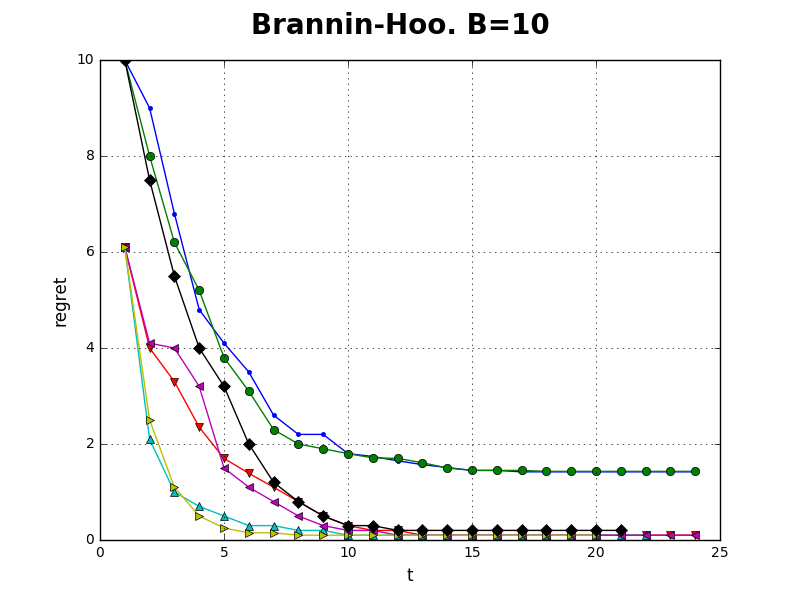}
    \end{subfigure}
    \\ 
    \begin{subfigure}[b]{0.47\textwidth}
        \includegraphics[width=\textwidth, height=0.65\textwidth]{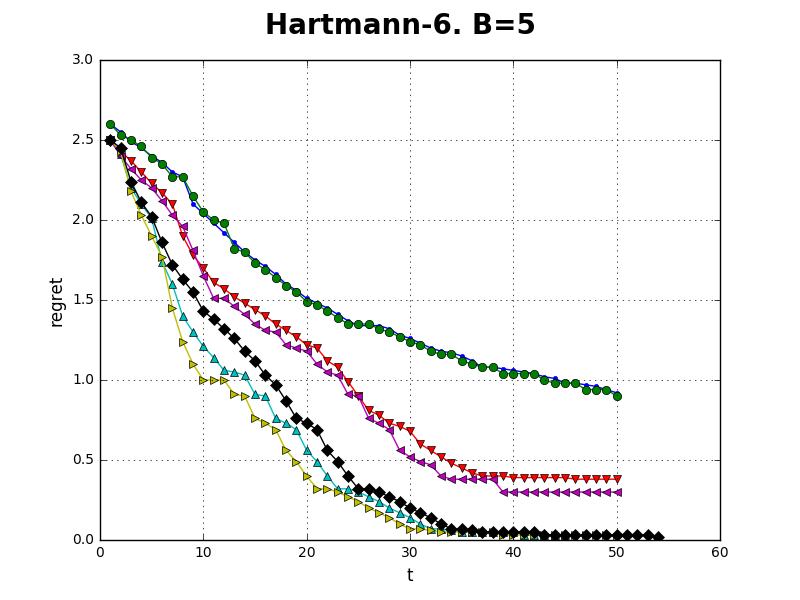}
    \end{subfigure}
    \hspace{0.8mm}
    \begin{subfigure}[b]{0.47\textwidth}
        \includegraphics[width=\textwidth, height=0.65\textwidth]{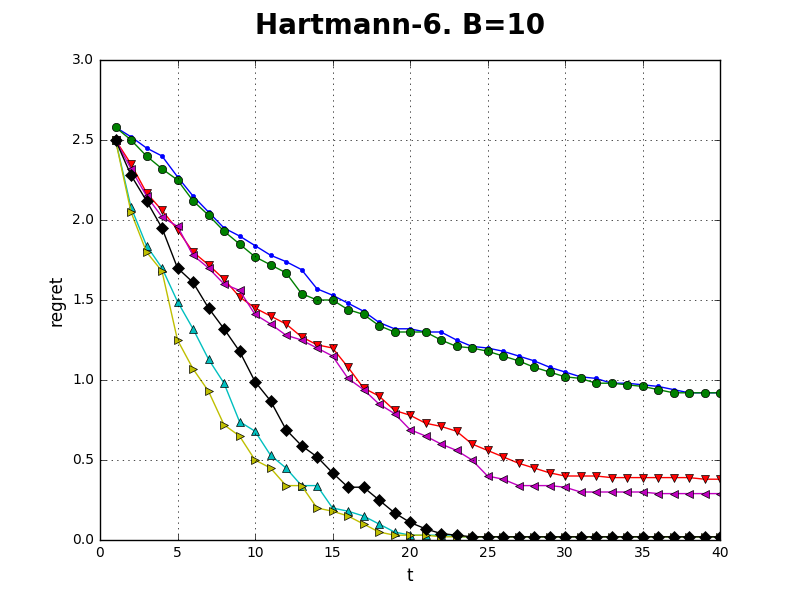}
    \end{subfigure}
    \captionsetup{font=footnotesize}
    \caption{Immediate regret of the algorithms on two synthetic functions with B = 5 and 10}
\end{figure}
In this section, we study the performance of the DPP-based algorithms, especially DPP-SAMPLE against some existing baselines. In particular, the methods we consider are BUCB~\cite{DesautelsBUCB}, B-EST, UCB-PE/UCB-DPP-MAX~\cite{ContalUCBPE}, EST-PE/EST-DPP-MAX, UCB-DPP-SAMPLE, EST-DPP-SAMPLE and UCB with local penalization (LP-UCB) \cite{BBOLP}. We used the publicly available code for BUCB and PE\footnote{http://econtal.perso.math.cnrs.fr/software/}. The code was modified to include the code for the EST counterparts using code for EST \footnote{https://github.com/zi-w/EST}. For LP-UCB, we use the publicly available GPyOpt codebase \footnote{http://sheffieldml.github.io/GPyOpt/} and implemented the MCMC algorithm by \cite{DPPMCMC} for $k$-DPP sampling with $\epsilon=0.01$ as the variation distance error. We were unable to compare against PPES as the code was not publicly available. Furthermore, as shown in the experiments in \cite{ParallelPredEntSearch}, PPES is very slow and does not scale beyond batch sizes of 4-5. Since UCB-PE almost always performs better than the simulation matching algorithm of \cite{AzimiSM} in all experiments that we could find in previous papers \cite{ParallelPredEntSearch,ContalUCBPE}, we forego a comparison against simulation matching as well to avoid clutter in the graphs. The performance is measured after $t$ batch evaluations using \textit{immediate regret}, $r_t = |f(\widetilde{x}_t) - f(x^*)|$, where $x^*$ is a known optimizer of $f$ and $\widetilde{x}_t$ is the recommendation of an algorithm after $t$ batch evaluations. We perform 50 experiments for each objective function and report the median of the immediate regret obtained for each algorithm. To maintain consistency, the first point of all methods is chosen to be the same (random). The mean  function of the prior GP was the zero function while the kernel function was the squared-exponential kernel of the form $k(x,y) = \gamma^2 \exp[-0.5 \sum_d (x_d - y_d^2)/l_d^2]$. The hyper-parameter $\lambda$ was picked from a broad Gaussian hyperprior and the the other hyper-parameters were chosen from uninformative Gamma priors.

Our first set of experiments is on a set of synthetic benchmark objective functions including Branin-Hoo \cite{LizotteBrannin}, a mixture of cosines \cite{AndersonCosines} and the Hartmann-6 function \cite{LizotteBrannin}. We choose batches of size 5 and 10. Due to lack of space, the results for mixture of cosines are provided in Appendix 5 while the results of the other two are shown in Figure 1. The results suggest that the DPP-SAMPLE based methods perform superior to the other methods. They do much better than their DPP-MAX and Batched counterparts. The trends displayed with regards to LP are more interesting. For the Branin-Hoo, LP-UCB starts out worse than the DPP based algorithms but takes over DPP-MAX relatively quickly and approaches the performance of DPP-SAMPLE when the batch size is 5. When the batch size is 10, the performance of LP-UCB does not improve much but both DPP-MAX and DPP-SAMPLE perform better. For Hartmann, LP-UCB outperforms both DPP-MAX algorithms by a considerable margin. The DPP-SAMPLE based methods perform better than LP-UCB. The gap, however, is more for the batch size of 10. Again, the performance of LP-UCB changes much lesser compared to the performance gain of the DPP-based algorithms. This is likely because the batches chosen by the DPP-based methods are more ``globally diverse'' for larger batch sizes. The superior performance of the sampling based methods can be attributed to allowing for uncertainty in the observations by sampling as opposed to greedily emphasizing on maximizing information gain.
\begin{figure}[t]\label{realFig}
    \begin{subfigure}[b]{0.47\textwidth}
        \includegraphics[width=\textwidth, height=0.65\textwidth]{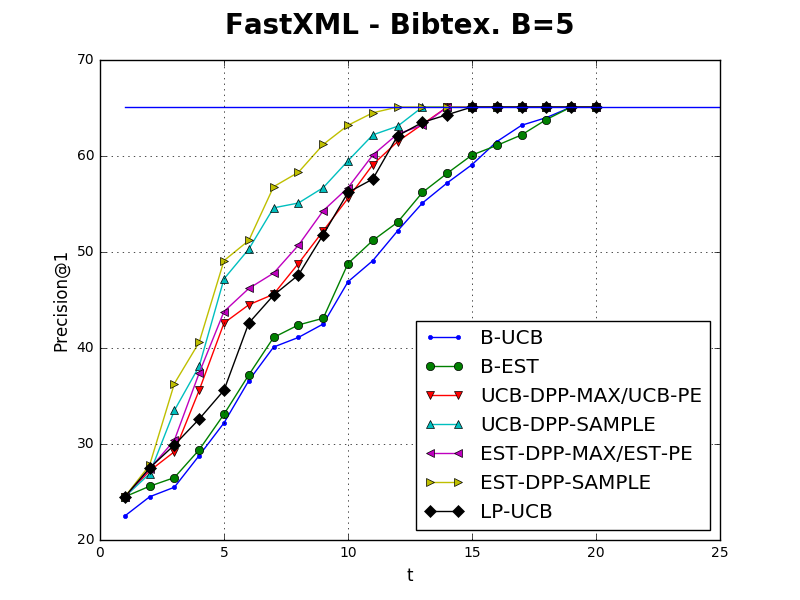}
    \end{subfigure}
    \hspace{0.8mm}
    \begin{subfigure}[b]{0.47\textwidth}
        \includegraphics[width=\textwidth, height=0.65\textwidth]{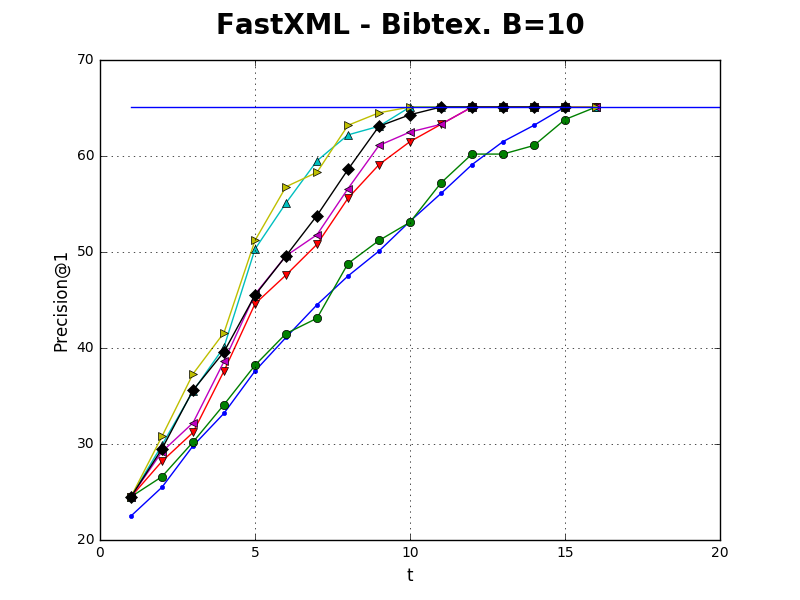}
    \end{subfigure}
    \hspace{0.8mm} 
    \begin{subfigure}[b]{0.47\textwidth}
        \includegraphics[width=\textwidth, height=0.65\textwidth]{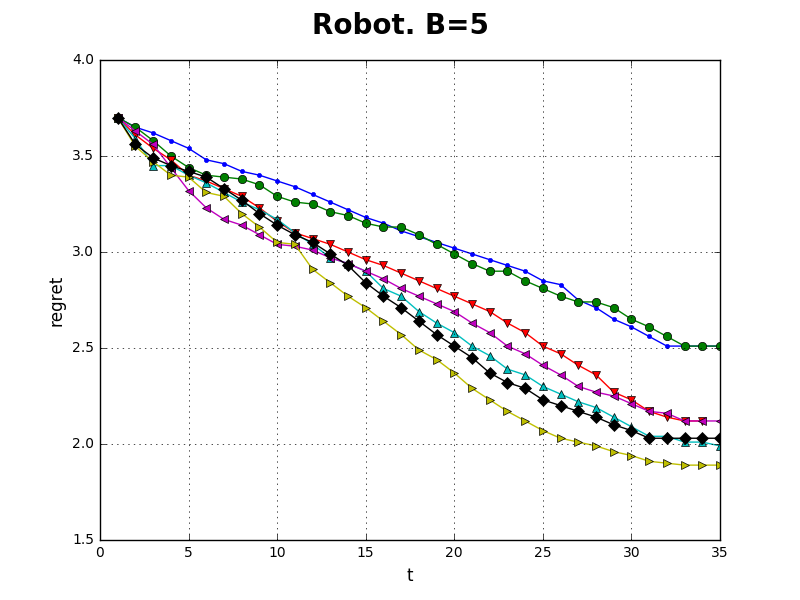}
    \end{subfigure}
    \hspace{0.8mm}
    \begin{subfigure}[b]{0.47\textwidth}
        \includegraphics[width=\textwidth, height=0.65\textwidth]{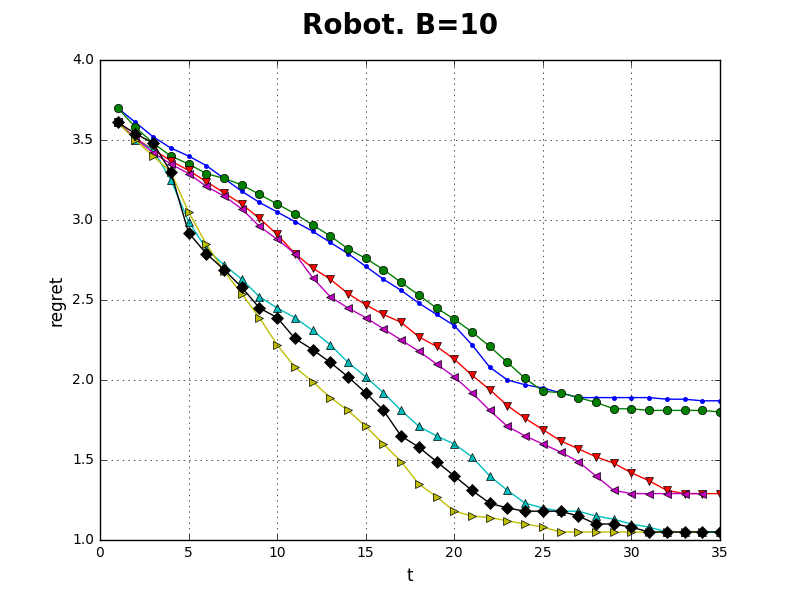}
    \end{subfigure}
    \captionsetup{font=footnotesize}
    \caption{Immediate regret of the algorithms for Prec@1 for FastXML on Bibtex and \texttt{Robot} with B = 5 and 10}
\end{figure}

We now consider maximization of real-world objective functions. The first function we consider, \texttt{robot}, returns the walking speed of a bipedal robot \cite{WesterRobot}. The function's input parameters, which live in $[0,1]^8$, are the robot's controller. We add Gaussian noise with $\sigma=0.1$ to the noiseless function. The second function, \texttt{Abalone}\footnote{The Abalone dataset is provided by the UCI Machine Learning Repository at \texttt{http://archive.ics.uci.edu/ml/datasets/Abalone}} is a test function used in \cite{ContalUCBPE}. The challenge of the dataset is to predict the age of a species of sea snails from physical measurements. Similar to \cite{ContalUCBPE}, we will use it as a maximization problem. Our final experiment is on hyper-parameter tuning for extreme multi-label learning. In extreme classification, one needs to deal with multi-class and multi-label problems involving a very large number of categories. Due to the prohibitively large number of categories, running traditional machine learning algorithms is not feasible. A recent popular approach for extreme classification is the FastXML algorithm \cite{FastXML}. The main advantage of FastXML is that it maintains high accuracy while training in a fraction of the time compared to the previous state-of-the-art. The FastXML algorithm has 5 parameters and the performance depends on these hyper-parameters, to a reasonable amount. Our task is to perform hyper-parameter optimization on these 5 hyper-parameters with the aim to maximize the Precision@k for $k=1$, which is the metric used in \cite{FastXML} to evaluate the performance of FastXML compared to other algorithms as well. While the authors of \cite{FastXML} run extensive tests on a variety of datasets, we focus on two small datasets : Bibtex \cite{FXMLBibtex} and Delicious\cite{FXMLDel}. As before, we use batch sizes of 5 and 10. The results for Abalone and the FastXML experiment on Delicious are provided in the appendix. The results for Prec@1 for FastXML on the Bibtex dataset and for the \texttt{robot} experiment are provided in Figure 2. The blue horizontal line for the FastXML results indicates the maximum Prec@k value found using grid search.

The results for \texttt{robot} indicate that while DPP-MAX does better than their Batched counterparts, the difference in the performance between DPP-MAX and DPP-SAMPLE is much less pronounced for a small batch size of 5 but is considerable for batch sizes of 10. This is in line with our intuition about sampling being more beneficial for larger batch sizes. The performance of LP-UCB is quite close and slightly better than UCB-DPP-SAMPLE. This might be because the underlying function is well-behaved (Lipschitz continuous) and thus, the estimate for the Lipschitz constant might be better which helps them get better results. This improvement is more pronounced for batch size of 10 as well. For Abalone (see Appendix 5), LP does better than DPP-MAX but there is a reasonable gap between DPP-SAMPLE and LP which is more pronounced for $B=10$.

The results for Prec@1 for the Bibtex dataset for FastXML are more interesting. Both DPP based methods are much better than their Batched counterparts. For $B=5$, DPP-SAMPLE is only slightly better than DPP-MAX. LP-UCB starts out worse than DPP-MAX but starts doing comparable to DPP-MAX after a few iterations. For $B=10$, there is not a large improvement in the gap between DPP-MAX and DPP-SAMPLE. LP-UCB however, quickly takes over UCB-DPP-MAX and comes quite close to the performance of DPP-SAMPLE after a few iterations. For the Delicious dataset (see Appendix 5), we see a similar trend of the improvement of sampling to be larger for larger batch sizes. LP-UCB displays an interesting trend in this experiment by doing much better than UCB-DPP-MAX for $B=5$ and is in fact quite close to the performance of DPP-SAMPLE. However, for $B=10$, its performance is much closer to UCB-DPP-MAX. DPP-SAMPLE loses out to LP-UCB only on the \texttt{robot} dataset and does better for all the other datasets. Furthermore, this improvement seems more pronounced for larger batch sizes. We leave experiments with other kernels and a more thorough experimental evaluation with respect to batch sizes for future work.
\section{Conclusion}
We have proposed a new method for batched Gaussian Process bandit (batch Bayesian) optimization based on DPPs which are desirable in this case as they promote diversity in batches. The DPP kernel is automatically figured out on the fly which allows us to show regret bounds for DPP maximization and sampling based methods for this problem. We show that this framework exactly recovers a popular algorithm for BBO, namely the UCB-PE when we consider DPP maximization using the greedy algorithm. We showed that the regret for the sampling based method is always less than the maximization based method. We also derived their EST counterparts and also provided a simpler proof of the information gain for RBF kernels which leads to a slight improvement in the best bound known. Our experiments on a variety of synthetic and real-world tasks validate our theoretical claims that sampling performs better than maximization and other methods. 
\bibliographystyle{plain}
\bibliography{references}

\begin{thebibliography}{10}

\bibitem{DPPMCMC}
N.~Anari, S.O. Gharan, and A.~Rezaei.
\newblock Monte carlo markov chains algorithms for sampling strongly rayleigh
  distributions and determinantal point processes.
\newblock {\em COLT}, 2016.

\bibitem{AndersonCosines}
B.S. Anderson, A.W. Moore, and D.~Cohn.
\newblock A nonparametric approach to noisy and costly optimization.
\newblock {\em ICML}, 2000.

\bibitem{Auer}
P.~Auer.
\newblock Using confidence bounds for exploration-exploitation trade-offs.
\newblock {\em JMLR}, 3:397--422, 2002.

\bibitem{AzimiSM}
J.~Azimi, A.~Fern, and X.~Fern.
\newblock Batch bayesian optimization via simulation matching.
\newblock 2010.

\bibitem{SchurIdentiy}
R.~Brualdi and H.~Schneider.
\newblock Determinantal identities: Gauss, schur, cauchy, sylvester, kronecker,
  jacobi, binet, laplace, muir, and cayley.
\newblock {\em Linear Algebra and its Applications}, 1983.

\bibitem{CivrilM2009}
A.~{\c{C}}ivril and M.~Magdon{-}Ismail.
\newblock On selecting a maximum volume sub-matrix of a matrix and related
  problems.
\newblock {\em Theor. Comput. Sci.}, 410(47-49):4801--4811, 2009.

\bibitem{CivrilM2013}
A.~{\c{C}}ivril and M.~Magdon{-}Ismail.
\newblock Exponential inapproximability of selecting a maximum volume
  sub-matrix.
\newblock {\em Algorithmica}, 65(1):159--176, 2013.

\bibitem{ContalUCBPE}
E.~Contal, D.~Buffoni, D.~Robicquet, and N.~Vayatis.
\newblock Parallel gaussian process optimization with upper confidence bound
  and pure exploration.
\newblock {\em ECML}, 2013.

\bibitem{DesautelsBUCB}
T.~Desautels, A.~Krause, and J.W. Burdick.
\newblock Parallelizing exploration-exploitation tradeoffs in gaussian process
  bandit optimization.
\newblock {\em JMLR}, 15:4053--4103, 2014.

\bibitem{Deshpande2010}
A.~Deshpande and L.~Rademacher.
\newblock {Efficient volume sampling for row/column subset selection}.
\newblock {\em FOCS}, 2010.

\bibitem{BBOLP}
J.~Gonzalez, Z.~Dai, P.~Hennig, and N.~Lawrence.
\newblock Batch bayesian optimization via local penalization.
\newblock {\em AISTATS}, 2016.

\bibitem{GonzalezGlasses16}
J.~Gonz{\'{a}}lez, M.~A. Osborne, and N.~D. Lawrence.
\newblock {GLASSES:} relieving the myopia of bayesian optimisation.
\newblock {\em AISTATS}, 2016.

\bibitem{EntSearch}
P.~Hennig and C.~Schuler.
\newblock Entropy search for information-efficient global optimization.
\newblock {\em JMLR}, 13, 2012.

\bibitem{PredEntSearch}
J.M. Hernandex-Lobato, M.W. Hoffman, and Z.~Ghahramani.
\newblock Predicitive entropy search for efficient global optimization of
  black-box functions.
\newblock {\em NIPS}, 2014.

\bibitem{FXMLBibtex}
I.~Katakis, G.~Tsoumakas, and I.~Vlahavas.
\newblock Multilabel text classification for automated tag suggestion.
\newblock {\em ECML/PKDD Discovery Challenge}, 2008.

\bibitem{KrauseOng}
A.~Krause and C.~S. Ong.
\newblock Contextual gaussian process bandit optimization.
\newblock {\em NIPS}, 2011.

\bibitem{Krause2008}
Andreas Krause, Ajit Singh, and Carlos Guestrin.
\newblock {Near-Optimal Sensor Placements in Gaussian Processes: Theory,
  Efficient Algorithms and Empirical Studies}.
\newblock {\em J. Mach. Learn. Res.}, 9:235--284, 2008.

\bibitem{Kulesza2011}
Alex Kulesza and Ben Taskar.
\newblock k-dpps: Fixed-size determinantal point processes.
\newblock In {\em ICML}, 2011.

\bibitem{Kulesza2012}
Alex Kulesza and Ben Taskar.
\newblock {Determinantal Point Processes for Machine Learning}.
\newblock {\em Found. Trends® Mach. Learn.}, (2-3):123--286, 2012.

\bibitem{NystromJegelka}
C.~Li, S.~Jegelka, and S.~Sra.
\newblock Fast dpp sampling for nyström with application to kernel methods.
\newblock {\em ICML}, 2016.

\bibitem{LizotteBrannin}
D.~Lizotte.
\newblock Pratical bayesian optimization.
\newblock {\em PhD thesis, University of Alberta}, 2008.

\bibitem{Lyons2003}
R.~Lyons.
\newblock Determinantal probability measures.
\newblock {\em Publications Math{\'e}matiques de l'Institut des Hautes
  {\'E}tudes Scientifiques}, 98(1):167--212, 2003.

\bibitem{Nikolov15}
A.~Nikolov.
\newblock Randomized rounding for the largest simplex problem.
\newblock In {\em STOC}, pages 861--870, 2015.

\bibitem{FastXML}
Y.~Prabhu and M.~Varma.
\newblock Fastxml: A fast, accurate and stable tree-classifier for extreme
  multi-label learning.
\newblock {\em KDD}, 2014.

\bibitem{RasWil}
C.~Rasmussen and C.~Williams.
\newblock Gaussian processes for machine learning.
\newblock {\em MIT Press}, 2008.

\bibitem{Robbins}
H.~Robbins.
\newblock Some aspects of the sequential design of experiments.
\newblock {\em Bul. Am. Math. Soc.}, 1952.

\bibitem{Seeger}
M.~W. Seeger, S.~M. Kakade, and D.~P. Foster.
\newblock Information consistency of nonparametric gaussian process methods.
\newblock {\em IEEE Tr. Inf. Theo.}, 54(5):2376--2382, 2008.

\bibitem{ParallelPredEntSearch}
A.~Shah and Z.~Ghahramani.
\newblock Parallel predictive entropy search for batch global optimization of
  expensive objective functions.
\newblock {\em NIPS}, 2015.

\bibitem{Shirai2003}
T.~Shirai and Y.~Takahashi.
\newblock Random point fields associated with certain fredholm determinants i:
  fermion, poisson and boson point processes.
\newblock {\em Journal of Functional Analysis}, 205(2):414 -- 463, 2003.

\bibitem{Snoek}
J.~Snoek, H.~Larochelle, and R.P. Adams.
\newblock Practical bayesian optimization of machine learning.
\newblock {\em NIPS}, 2012.

\bibitem{Srinivas}
N.~Srinivas, A.~Krause, S.~Kakade, and M.~Seeger.
\newblock Information-theoretic regret bounds for gaussian process optimization
  in the bandit setting.
\newblock {\em IEEE Transactions on Information Theory}, 58(5):3250--3265,
  2012.

\bibitem{Thornton}
C.~Thornton, F.~Hutter, H.~H. Hoos, and K.~Leyton-Brown.
\newblock Auto-weka : combined selection and hyper-parameter optimization of
  classification algorithms.
\newblock {\em KDD}, 2003.

\bibitem{FXMLDel}
G.~Tsoumakas, I.~Katakis, and I.~Vlahavas.
\newblock Effective and efficient multilabel classification in domains with
  large number of labels.
\newblock {\em ECML/PKDD 2008 Workshop on Mining Multidimensional Data}, 2008.

\bibitem{WangShan}
G.~Wang and S.~Shan.
\newblock Review of metamodeling techniques in support of engineering design
  optimization.
\newblock {\em Journal of Mechanical Design}, 129:370--380, 2007.

\bibitem{ZiWang}
Z.~Wang, B.~Zhou, and S.~Jegelka.
\newblock Optimization as estimation with gaussian processes in bandit
  settings.
\newblock {\em AISTATS}, 2016.

\bibitem{WesterRobot}
E.~Westervelt and J.~Grizzle.
\newblock Feedback control of dynamic bipedal robot locomotion.
\newblock {\em Control and Automation Series}, 2007.

\bibitem{ZiembaVickson}
W.~Ziemba and R.~Vickson.
\newblock Stochastic optimization models in finance.
\newblock {\em World Scientific Singapore}, 2006.

\end{thebibliography}
\section{APPENDIX}
\subsection{The Batched-EST Algorithm}
The proofs for B-EST are relatively straightforward which follow from combining the proofs of \cite{DesautelsBUCB} and \cite{ZiWang}. We provide them here for completeness. We first need a series of supporting lemmas which are variants of the lemmas of UCB for EST. These require different bounds than the ones for BUCB.

\begin{lemma}\label{ESTfBound}(Lemma 3.2 in \cite{ZiWang}) Pick $\delta \in (0,1)$ and set $\zeta_t = 2\log(\pi^2t^2/6\delta)$. Then, for an arbitrary sequence of actions $x_1,x_2,\ldots \in \mathcal{X}$,
\begin{align*}
\mathsf{Pr}[|f(x_t) - \mu_{t-1}(x_t)|\leq \zeta_t^{1/2}\sigma_{t-1}(x_t)] \geq 1-\delta,\mbox{ for all } t\in [1,T].
\end{align*}
\end{lemma}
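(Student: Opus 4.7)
The plan is to apply a Gaussian tail bound pointwise at each time $t$ and then union bound over $t \geq 1$; this is essentially the standard GP concentration argument, and the choice $\zeta_t = 2\log(\pi^2 t^2/(6\delta))$ is engineered precisely so that the union bound telescopes to $\delta$.

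First I would condition on the filtration $\mathcal{F}_{t-1}$ generated by $\mathcal{D}_{t-1} = \{(x_\tau,y_\tau)\}_{\tau=1}^{t-1}$ together with the action $x_t$. By the standard posterior formula for a GP, we have $f(x_t)\mid \mathcal{F}_{t-1} \sim \mathcal{N}(\mu_{t-1}(x_t),\sigma_{t-1}^2(x_t))$, so $Z_t := (f(x_t)-\mu_{t-1}(x_t))/\sigma_{t-1}(x_t)$ is a conditionally standard normal (the degenerate case $\sigma_{t-1}(x_t)=0$ is trivial, since then $f(x_t)=\mu_{t-1}(x_t)$ almost surely). The classical one-dimensional Gaussian tail bound $\Pr[|Z|>c]\leq e^{-c^2/2}$ for $Z\sim\mathcal{N}(0,1)$, used with $c=\zeta_t^{1/2}$, then gives
\begin{align*}
\Pr\!\left[|f(x_t)-\mu_{t-1}(x_t)| > \zeta_t^{1/2}\sigma_{t-1}(x_t)\,\big|\,\mathcal{F}_{t-1}\right] \leq e^{-\zeta_t/2} = \frac{6\delta}{\pi^2 t^2}.
\end{align*}
Taking expectations converts this into the same unconditional bound at each fixed $t$.

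Finally I would union bound over $t\geq 1$: the total failure probability is at most $\sum_{t\geq 1} 6\delta/(\pi^2 t^2) = \delta$, using $\sum_{t\geq 1} 1/t^2 = \pi^2/6$. This yields the claim in the form stated. There is essentially no difficult step; the only subtlety worth flagging is that unlike the UCB-style bound of \cite{Srinivas}, which unions over all $x \in \mathcal{X}$ at each time step and therefore pays a $\log|\mathcal{X}|$ factor inside $\beta_t$, the present lemma is stated for an arbitrary (pre-specified or $\mathcal{F}_{t-1}$-adapted) sequence of actions, so no spatial union bound is needed and $\zeta_t$ remains $\log|\mathcal{X}|$-free.
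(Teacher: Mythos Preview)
Your argument is correct and is exactly the standard Gaussian-tail-plus-union-bound proof underlying this lemma; the paper itself does not reprove the statement but simply imports it from \cite{ZiWang}, whose proof proceeds in the same way. Your remark distinguishing this from the spatial union bound in \cite{Srinivas} (and the resulting absence of a $\log|\mathcal{X}|$ term in $\zeta_t$) is also on point.
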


The GP-UCB/EST decision rule is,
\begin{align*}
  x_t = \arg\max\limits_{x \in \mathcal{X}} \big[\mu_{t-1}(x) + \alpha^{1/2}_t \sigma_{t-1}(x)\big]
\end{align*}
For EST, $\alpha_t = \big(\frac{m-\mu_{t-1}(x)}{\sigma_{t-1}(x)}\big)^2$. Implicit in this definition of the decision rule is the corresponding confidence interval for each $x \in \mathcal{X}$,
\begin{align*}
C_t^{seq}(x) \equiv \big[\mu_{t-1}(x)-\alpha_t^{1/2}\sigma_{t-1}(x), \mu_{t-1}(x)+\alpha_t^{1/2}\sigma_{t-1}(x)\big],
\end{align*}
where this confidence interval's upper confidence bound is the value of the argument of the decision rule. Furthermore, the width of any confidence interval is the difference between the uppermost and the lowermost limits, here $w=2\alpha_t^{1/2}\sigma_{t-1}(x)$. In the case of BUCB/B-EST, the batched confidence rules are of the form,
\begin{align*}
C_t^{batch}(x) \equiv \big[\mu_{fb[t]}(x)-\beta_t^{1/2}\sigma_{t-1}(x), \mu_{fb[t]}(x)+\beta_t^{1/2}\sigma_{t-1}(x)\big]
\end{align*}
 
 \begin{lemma}\label{justC} (Similar to Lemma 12 in \cite{DesautelsBUCB}) If $f(x_t) \in C_t^{batch}(x_t) \forall t \geq 1$ and given that actions are selected using EST, it holds that,
 \begin{align*} R_T \leq \sqrt{T C_1\gamma_T} (\zeta_T^{1/2}+\alpha^{1/2}_{t^{*}})
 \end{align*}
 \end{lemma}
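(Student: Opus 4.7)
My plan is to mimic the sequential EST analysis of \cite{ZiWang} while paying careful attention to the discrepancy between the batched posterior mean $\mu_{fb[t]}$ used to select the action and the (hypothetical) fully updated posterior mean $\mu_{t-1}$ that appears in Lemma~\ref{ESTfBound}. Following the structure of Lemma~12 in \cite{DesautelsBUCB}, I would start from $r_t=f(x^*)-f(x_t)$ and upper bound it by $\hat m - f(x_t)$, which is justified because the EST framework ensures $\hat m\geq m=f(x^*)$ on the high-probability event (folding this event into the $1-\delta$ bound via the usual union-bound argument). This reduces the instantaneous regret to controlling $\hat m - f(x_t)$.

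Next, I would split $\hat m - f(x_t) = \big(\hat m - \mu_{fb[t]}(x_t)\big) + \big(\mu_{fb[t]}(x_t) - f(x_t)\big)$. The second summand is directly controlled by the batched confidence assumption $f(x_t)\in C_t^{batch}(x_t)$, which gives $\mu_{fb[t]}(x_t)-f(x_t)\leq \beta_t^{1/2}\sigma_{t-1}(x_t)$. For the first summand, I would invoke the B-EST decision rule: since $x_t=\arg\max_x\mu_{fb[t]}(x)+\beta_t^{1/2}\sigma_{t-1}(x)$, testing the inequality against the EST argmin $x_t^\circ=\arg\min_x(\hat m-\mu_{fb[t]}(x))/\sigma_{t-1}(x)$ (at which $\hat m-\mu_{fb[t]}(x_t^\circ)=\alpha_t^{1/2}\sigma_{t-1}(x_t^\circ)$ by definition) and using $\beta_t^{1/2}=C'\alpha_t^{1/2}\geq\alpha_t^{1/2}$ gives $\hat m\leq \mu_{fb[t]}(x_t)+\beta_t^{1/2}\sigma_{t-1}(x_t)$. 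To surface a separate $\zeta_t^{1/2}$ piece (rather than only $\beta_t^{1/2}$), I would route through $\mu_{t-1}$ instead of $\mu_{fb[t]}$ in the lower bound on $f(x_t)$: by Lemma~\ref{ESTfBound}, $f(x_t)\geq \mu_{t-1}(x_t)-\zeta_t^{1/2}\sigma_{t-1}(x_t)$, and combining with the batched confidence interval via the triangle inequality bounds $|\mu_{t-1}(x_t)-\mu_{fb[t]}(x_t)|$ in terms of $(\beta_t^{1/2}+\zeta_t^{1/2})\sigma_{t-1}(x_t)$. Feeding these back in yields a per-step bound of the form $r_t\leq c_1\alpha_t^{1/2}\sigma_{t-1}(x_t)+c_2\zeta_t^{1/2}\sigma_{t-1}(x_t)$.

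To close, I would square, sum, and apply Cauchy--Schwarz in the standard way: $R_T^2\leq T\sum_{t=1}^T r_t^2$, followed by the usual BO telescoping $\sigma_{t-1}^2(x_t)\leq \frac{\sigma^2}{\log(1+\sigma^{-2})}\log\!\big(1+\sigma^{-2}\sigma_{t-1}^2(x_t)\big)$ together with $\sum_{t=1}^T\log(1+\sigma^{-2}\sigma_{t-1}^2(x_t))\leq 2\gamma_T$ (which follows from the definition of the maximum information gain). Bounding $\alpha_t\leq \alpha_{t^*}$ and $\zeta_t\leq \zeta_T$ uniformly and absorbing the $C'$ and numerical factors into the constant $C_1$ then yields exactly $R_T\leq \sqrt{T\,C_1\gamma_T}\,(\zeta_T^{1/2}+\alpha_{t^*}^{1/2})$.

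The main obstacle is the third step above: because the B-EST decision rule uses the inflated coefficient $\beta_t^{1/2}=C'\alpha_t^{1/2}$ rather than the ``true'' EST coefficient $\alpha_t^{1/2}$, the selected point $x_t$ is \emph{not} the EST argmin, so one cannot directly identify $\hat m-\mu_{fb[t]}(x_t)$ with $\alpha_t^{1/2}\sigma_{t-1}(x_t)$ as in the sequential proof of \cite{ZiWang}. The relationship is only recovered through the decision-rule comparison against $x_t^\circ$; tracking where the $C'$ factor is absorbed (into $C_1$, following the convention of \cite{DesautelsBUCB}) is the delicate bookkeeping step, and is also what ties this lemma to the outer statement $R_T\leq C'R_T^{seq}+2\|f\|_\infty T^{init}$ in Theorem~\ref{BESTThm} once the two-stage initialization is layered on top.
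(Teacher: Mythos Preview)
Your proposal is in the same spirit as the paper's proof, which is literally the two-sentence remark ``The proof of Lemma~12 in \cite{DesautelsBUCB} just uses the fact that the sequential regret bounds of UCB are $2\beta_t^{1/2}\sigma_t(x_t)$. We follow their same proof but use the EST sequential bounds to get the desired result.'' In other words, the paper simply asserts that one should swap the UCB per-step bound $r_t\le 2\beta_t^{1/2}\sigma_{t-1}(x_t)$ for the EST per-step bound $r_t\le(\alpha_t^{1/2}+\zeta_t^{1/2})\sigma_{t-1}(x_t)$ from \cite{ZiWang} and then run the usual Cauchy--Schwarz plus $\sum_t\log(1+\sigma^{-2}\sigma_{t-1}^2)\le 2\gamma_T$ argument. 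You are effectively carrying this out, so the high-level match is good.

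Where you diverge is in the amount of care you take. The paper does not acknowledge the issue you flag---that B-EST selects with the inflated coefficient $\beta_t^{1/2}=C'\alpha_t^{1/2}$ rather than $\alpha_t^{1/2}$, so $x_t$ need not equal the EST argmin and one cannot read off $\hat m-\mu_{fb[t]}(x_t)=\alpha_t^{1/2}\sigma_{t-1}(x_t)$ directly. Your decision-rule comparison against $x_t^\circ$ is the right fix, and it is a genuine detail the paper elides. Your subsequent detour through $\mu_{t-1}$ and Lemma~\ref{ESTfBound} to manufacture a separate $\zeta_t^{1/2}$ term, however, is more elaborate than necessary: once you have $\hat m\le\mu_{fb[t]}(x_t)+\beta_t^{1/2}\sigma_{t-1}(x_t)$ and $\mu_{fb[t]}(x_t)-f(x_t)\le\beta_t^{1/2}\sigma_{t-1}(x_t)$, you already get $r_t\le 2\beta_t^{1/2}\sigma_{t-1}(x_t)=2C'\alpha_t^{1/2}\sigma_{t-1}(x_t)$, which is trivially $\le 2C'(\alpha_t^{1/2}+\zeta_t^{1/2})\sigma_{t-1}(x_t)$; the triangle-inequality step through $\mu_{t-1}$ buys nothing extra. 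Either way, your final move of ``absorbing $C'$ into $C_1$'' is exactly how the lemma connects to the $C'R_T^{seq}$ form in Theorem~\ref{BESTThm}; the paper is equally loose about this bookkeeping, so your treatment is consistent with its level of rigor.
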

 \begin{proof} The proof of Lemma 12 in \cite{DesautelsBUCB} just uses the fact that the sequential regret bounds of UCB are $2\beta^{1/2}_t \sigma_t(x_t)$. We follow their same proof but use the EST sequential bounds to get the desired result.
 \end{proof}
 \begin{proof} (of Theorem 3.2 in the main paper) The proof is similar to Theorem 5 in \cite{DesautelsBUCB}. The sum of the regrets over the T timesteps is split over the first $T^{init}$ timesteps and the remaining timesteps. The former term $\sum_{t=1}^{T^{init}} m - f(x_t) \leq 2 T^{init} \|f\|_{\infty}$. The latter term is treated as simple BUCB and from Lemma \ref{justC}, we get $R_{T^{init}+1 : T} \leq \sqrt{(T-T^{init})C_1\gamma_{(T-T^{init})}}(\zeta_T^{1/2}+\alpha^{1/2}_{t^{*}}) \leq \sqrt{T C_1\gamma_{T}}(\zeta_T^{1/2}+\alpha^{1/2}_{t^{*}})$ as $\gamma_T$ is a non-decreasing function. Combining the two terms gives us the desired result.
 \end{proof}

\subsection{Batch Bayesian Optimization via DPP-Maximization}
In this section, we present the proof of the regret bounds for BBO via DPP-MAX. Since the GP-UCB-DPP-MAX is the same as GP-UCB-PE, we focus on GP-EST-DPP-MAX. We first restate the EST part of Theorem 3.3. Firstly, none of our proofs will depend on the order in which the batch was constructed but for sake of clarity of exposition, whenever needed, we can consider any arbitrary ordering of the $B-1$ points chosen by maximizing a $(B-1)$-DPP or sampling from it.
\begin{theorem}\label{ESTDPPMAXThm} At iteration $t$, fix $\delta>0$ and let $\beta_t = \big[\min\limits_{x \in \mathcal{X}} \frac{m - \mu_{t-1}(x)}{\sigma_{t-1}(x)}\big]$, $\zeta_t = 2\log(\pi^2t^2/3\delta)$ and $C_1 = 36/\log(1+\sigma^{-2})$. Then, with probability $\geq 1-\delta$, the full cumulative regret incurred by EST-DPP-MAX is $R_t \leq \sqrt{C_1 TB\gamma_{TB}}(\beta_{t^*}^{1/2} + \zeta^{1/2}_{TB})$.
\end{theorem}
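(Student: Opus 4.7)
The plan is to adapt the UCB-PE regret analysis of Contal et al., exploiting Lemma \ref{UCBESTEquivLemma} which casts EST as UCB with the adaptive parameter $\beta_t = \bigl(\min_x (\hat m - \mu_{t-1}(x))/\sigma_{t-1}(x)\bigr)^2$. Combined with the already-established equivalence between EST-DPP-MAX and the greedy PE variant of EST, the task reduces to mirroring the UCB-PE argument while threading through (i) the adaptive EST parameter $\beta_t$ and (ii) the separate function-value confidence parameter $\zeta_t$ from Lemma \ref{ESTfBound}.

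Concretely, I would proceed in four steps. First, apply Lemma \ref{ESTfBound} with $\zeta_t = 2\log(\pi^2 t^2/3\delta)$ and a union bound that covers both the $B$ batch points $x_{t,b}$ and the LCB-maximizer $x_t^\bullet$ at every iteration; the extra factor of $2$ inside the log (vs.\ the sequential $2\log(\pi^2 t^2/6\delta)$) accounts for $x_t^\bullet$, as noted in the theorem's proof remark. This yields $|f(x) - \mu_{t-1}(x)| \le \zeta_t^{1/2}\sigma_{t-1}(x)$ for all relevant $x$, simultaneously across all $t$, with probability $\ge 1-\delta$. Second, bound the simple regret $r_{t,1}$ of the first (EST-chosen) point by $(\beta_t^{1/2}+\zeta_t^{1/2})\sigma_{t-1}(x_{t,1})$ exactly as in Theorem 3.1 of Wang et al. Third, for the remaining $B-1$ points, use the relevance region $\mathcal{R}_t^+$: its construction guarantees $x^\ast \in \mathcal{R}_t^+$ on the good event, and every point in $\mathcal{R}_t^+$ has its regret controlled by $c\,(\beta_t^{1/2}+\zeta_t^{1/2})\sigma_{t,1}(x_{t,b})$, where $\sigma_{t,1}$ is the posterior standard deviation after conditioning on $x_{t,1}$ and $c$ is a small constant from the $2\sqrt{\beta_{t+1}}$ slack in $\mathcal{R}_t^+$. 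Fourth, invoke the Schur identity already used in the equivalence discussion: $\sum_{b=2}^{B}\log(1+\sigma^{-2}\sigma_{t,b-1}^2(x_{t,b})) = \log\det\bigl((K_{t,1})_S\bigr)$, where $S=\{x_{t,2},\dots,x_{t,B}\}$, and since DPP-MAX selects $S$ to maximize precisely this determinant, combine with $\sigma^2 \le C_2\sigma^2\log(1+\sigma^{-2}\sigma^2)$ to obtain $\sum_b \sigma_{t,b-1}^2(x_{t,b}) \le C_2\cdot\tfrac12\log\det(I+\sigma^{-2}K_{t,B})$. Summing over $t$ and using $\gamma_{TB} \ge \tfrac12\sum_t\log\det(I+\sigma^{-2}K_{t,B})$ together with Cauchy–Schwarz $\sum_{t,b}r_{t,b} \le \sqrt{TB\sum_{t,b}r_{t,b}^2}$ produces $R_{TB}\le \sqrt{C_1 TB\gamma_{TB}}(\beta_{t^\ast}^{1/2}+\zeta_{TB}^{1/2})$, with the constant $36$ in $C_1$ arising from squaring the combined $(\beta_t^{1/2}+\zeta_t^{1/2})$ factor with the relevance-region multiplier.

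The main obstacle will be step three: producing a per-point regret bound expressed in terms of the \emph{conditioned} posterior variance $\sigma_{t,1}$ rather than $\sigma_{t-1}$, so that step four's log-determinant telescoping actually closes against $\gamma_{TB}$. The UCB-PE argument achieves this by a two-sided sandwich in $\mathcal{R}_t^+$ that plays off the UCB of $x_{t,b}$ against the LCB of $x_t^\bullet$ and uses monotonicity of the posterior variance under conditioning; the subtlety for EST is that the EST $\beta_t$ is itself a function of $\sigma_{t-1}$ and is chosen at the first point of the batch, so one must verify that this \emph{fixed-per-batch} $\beta_t$ still dominates the regret of the later batch points, and that in the final summation $\beta_{t^\ast} = \max_t \beta_t$ is the correct uniform surrogate. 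I expect this to require a careful re-derivation of Lemmas 3–5 of Contal et al.\ with EST-style confidence intervals, but no genuinely new ingredient beyond the ones already present in the paper.
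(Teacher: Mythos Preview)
Your proposal is correct and follows the same route as the paper: per-point regret bounds via EST confidence intervals on the good event, followed by Cauchy--Schwarz and the information-gain bound (Lemma~\ref{sumVarsGamma}) to reach $\sqrt{C_1 TB\gamma_{TB}}(\beta_{t^*}^{1/2}+\zeta_{TB}^{1/2})$. The paper's actual write-up is much terser than your outline---it asserts $r_{t,b}\le(\beta_{t^*}^{1/2}+\zeta_{TB}^{1/2})\,\sigma_{t-1,b}(x_{t,b})$ in one line (leaving the relevance-region work for $b\ge 2$ implicit from Contal et al.) and in fact obtains the bound with $C_1'=2/\log(1+\sigma^{-2})$ before simply relaxing $C_1'\le C_1$, so your anticipated difficulty in step three and your explanation of the $36$ are more elaborate than what the paper actually does.
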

Notice that the logarithm term for $\zeta_t$ in the above theorem is twice that of the one in Lemma \ref{ESTfBound}. This happens by considering the same proof as that for Lemma \ref{ESTfBound} but taking a union bound over $x_t^\bullet$ along with $x_{t}$. We first prove some required lemmas. 

\begin{lemma}\label{nextBoundingLemma} The deviation of the first point, selected by either UCB or EST, is bounded by the deviation of any point selected by the DPP-MAX or DPP-SAMPLE in the previous iteration with high probability, i.e., 
\begin{align*}
\forall t < T, \forall 2 \leq b \geq B, \ \ \ \sigma_{t,1}(x_{t+1,1})\leq \sigma_{t-1,b}(x_{t,b})
\end{align*}
\end{lemma}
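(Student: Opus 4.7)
The plan is to chain two standard GP-posterior facts with a relevance-region containment argument in the spirit of UCB-PE.

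First I would invoke basic monotonicity of the posterior variance. The conditioning set underlying $\sigma_{t,1}$ is the full data from iterations $1,\dots,t-1$ together with $x_{t,1}$ (hallucinated, since variance does not depend on $y$-values), which is a strict superset of the conditioning set of $\sigma_{t-1,b}$ (iterations $1,\dots,t-2$ plus $x_{t-1,1},\dots,x_{t-1,b}$). Therefore, for every $x\in\mathcal{X}$, $\sigma_{t,1}(x)\leq \sigma_{t-1,b}(x)$, and in particular $\sigma_{t,1}(x_{t+1,1})\leq \sigma_{t-1,b}(x_{t+1,1})$.

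Second, I would use the greedy-maximality property of DPP-MAX that is already established in the previous subsection of the paper. Via Schur's determinant identity, the $b$-th greedy pick inside $\mathcal{R}_t^+$ is exactly the maximizer of the residual posterior variance under the current conditioning, so $\sigma_{t-1,b}(x_{t,b})\ge \sigma_{t-1,b}(x)$ for every $x\in \mathcal{R}_t^+$. The missing piece is then the containment $x_{t+1,1}\in \mathcal{R}_t^+$, which I would prove by working on the high-probability event of Lemma \ref{ESTfBound} (or its UCB analog), namely $|f(x)-\mu_{t-1}(x)|\leq \beta_t^{1/2}\sigma_{t-1}(x)$ for every $x$. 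On this event, $y_t^{\bullet}=f_t^-(x_t^{\bullet})\leq f(x^*)$ and, since $x_{t+1,1}$ maximizes the iteration-$(t+1)$ upper confidence bound, a short computation bounding $|\mu_t-\mu_{t-1}|$ by $\beta_{t+1}^{1/2}\sigma_{t-1}$ and using $\sigma_t\leq \sigma_{t-1}$ promotes this to $\mu_{t-1}(x_{t+1,1})+2\beta_{t+1}^{1/2}\sigma_{t-1}(x_{t+1,1})\geq f(x^*)\geq y_t^{\bullet}$, placing $x_{t+1,1}$ in $\mathcal{R}_t^+$. For EST, the same argument is reused with the EST value of $\beta_t$ via Lemma \ref{UCBESTEquivLemma}. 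Combining the three ingredients yields $\sigma_{t,1}(x_{t+1,1})\leq \sigma_{t-1,b}(x_{t+1,1})\leq \sigma_{t-1,b}(x_{t,b})$.

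For the DPP-SAMPLE case, I would appeal to the ordering-invariance remark already made in the paper: relabel the $B-1$ sampled batch elements so that they are processed as residual-variance maximizers among the sampled set, and the same two-step chain yields the inequality at each position $b$. This is inherently weaker than DPP-MAX, since a sampled subset need not contain the unrestricted variance maximizer over $\mathcal{R}_t^+$, so the argument really only gives the bound once restricted to the sampled subset (which is what is needed downstream in Theorem \ref{DPPSAMPLEThm}).

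The main obstacle is precisely the relevance-region containment. The definition of $\mathcal{R}_t^+$ uses $\beta_{t+1}$ with the slack factor $2$ and the iteration-$(t-1)$ posterior, whereas the UCB/EST selection of $x_{t+1,1}$ uses $\beta_{t+1}$ with the iteration-$t$ posterior; bridging the two requires simultaneously controlling $\mu_t-\mu_{t-1}$, $\sigma_t/\sigma_{t-1}$, and the growth of $\beta$, all on the same high-probability event. A uniform-in-$t$ union bound (the standard $\pi^2t^2/6$ device already built into $\zeta_t$) is needed so that this containment holds for every $t<T$ at once, which is why the $\zeta_t$ used in Theorem \ref{DPPMAXThm} is twice the sequential-EST value.
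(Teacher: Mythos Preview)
Your overall architecture matches the paper's proof: (i) show $x_{t+1,1}\in\mathcal{R}_t^+$ on the high-probability event, (ii) use variance-maximality of $x_{t,b}$ over $\mathcal{R}_t^+$ to get $\sigma_{t-1,b}(x_{t+1,1})\le\sigma_{t-1,b}(x_{t,b})$, and (iii) apply ``information never hurts'' to pass from $\sigma_{t-1,b}$ to $\sigma_{t,1}$. Steps (ii) and (iii) are exactly what the paper does.

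The one substantive divergence is in step (i). The paper does \emph{not} route through $f(x^*)$; it argues directly that
\[
f^+_{t+1}(x_{t+1,1})\;\ge\; f^+_{t+1}(x_t^\bullet)\;\ge\; f^-_t(x_t^\bullet)\;=\;y_t^\bullet,
\]
where the first inequality is the definition of $x_{t+1,1}$ and the second uses the confidence interval only at the single point $x_t^\bullet$. This is precisely why the paper says the $\zeta_t$ doubling comes from ``a union bound over not just $x_{t,b}$ but also $x_t^\bullet$.'' Your route through $f(x^*)$ would additionally require the EST confidence bound to hold at $x^*$, but Lemma~\ref{ESTfBound} gives the bound only at the selected actions, not at arbitrary points; you would need to add $x^*$ to the union bound as well, which the paper avoids. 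Your ``short computation bounding $|\mu_t-\mu_{t-1}|$'' is also more than the paper writes down: the paper simply asserts $f^+_{t+1}(x_{t+1,1})\ge y_t^\bullet\Rightarrow x_{t+1,1}\in\mathcal{R}_t^+$ and defers the $\mu_{t-1}$-vs.-$\mu_t$ bridging to the factor $2$ built into the definition of $\mathcal{R}_t^+$.

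On DPP-SAMPLE: the paper does \emph{not} hedge the way you do. It asserts flatly that ``the proof does not depend on \ldots\ whether it was DPP-MAX or DPP-SAMPLE (consider an arbitrary ordering of points chosen in either case),'' and then uses the same pointwise inequality $\sigma_{t-1,b}(x_{t+1,1})\le\sigma_{t-1,b}(x_{t,b})$ without restricting to the sampled subset. Your observation that a sampled batch need not contain the variance maximizer over $\mathcal{R}_t^+$ is mathematically sound, but it is a critique of the paper's argument rather than a reproduction of it; the paper's proof as written claims the full inequality for both selection rules.
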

\begin{proof} The proof does not depend on the actual policy (UCB/EST) used for the first point of the batch or whether it was DPP-MAX or DPP-SAMPLE (consider an arbitary ordering of points chosen in either case).
By the definition of $x_{t+1,1}$, we have $f^{+}_{t+1}(x_{t+1,1})\geq f^{+}_{t+1}(x_t^\bullet)$. Also, from Lemma \ref{ESTfBound}, we have $f^{+}_{t+1}(x_t^\bullet)\geq f^{-}_{t}(x_t^\bullet)$. This is different than Lemma 2 in \cite{ContalUCBPE} as it now only holds for $x_t^\bullet$ rather than all $x \in \mathcal{X}$. Thus, with high probability, $f^{+}_{t+1}(x_{t+1,1})\geq y_t^\bullet$ and thus,  $x_{t+1,1}\in \mathcal{R}_t^{+}$. 
Now, from the definition of $x_{t,b}$, we have $\sigma_{t-1,b}(x_{t+1,1})\leq\sigma_{t-1,b}(x_{t,b})$ w.h.p. Using the ``Information never hurts'' principle \cite{Krause2008}, we know that the entropy of $f(x)$ for all locations $x$ can only decrease after observing a point $x_{t,k}$. For GPs, the entropy is also a non-decreasing function of the variance and thus, we have, $\sigma_{t,1}(x_{t+1,1})\leq \sigma_{t-1,b}(x_{t,b})$ and we are done.
\end{proof}

\begin{lemma}\label{sumBoundingTelescopic} (Lemma 3 in \cite{ContalUCBPE}) The sum of deviations of the points selected by the UCB/EST policy is bounded by the sum of deviations over all the selected points divided by B. Formally, with high probability,
\begin{align*}
\sum\limits_{t=1}^{T}\sigma_{t-1,1}(x_{t,1}) \leq \frac{1}{B}\sum\limits_{t=1}^{T}\sum\limits_{b=1}^{B}\sigma_{t-1,b}(x_{t,b}).
\end{align*}
\end{lemma}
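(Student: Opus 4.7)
The plan is to derive the target inequality by a short telescoping argument built on top of Lemma~\ref{nextBoundingLemma}. After multiplying through by $B$ and canceling the common $\sum_t \sigma_{t-1,1}(x_{t,1})$ from both sides, the claim is equivalent to showing
\[
(B-1)\sum_{t=1}^{T}\sigma_{t-1,1}(x_{t,1}) \;\leq\; \sum_{t=1}^{T}\sum_{b=2}^{B}\sigma_{t-1,b}(x_{t,b}),
\]
so it suffices to dominate each ``first-point'' cost on the left by the $B-1$ ``DPP/PE batch'' costs at a nearby iteration.

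The workhorse step is Lemma~\ref{nextBoundingLemma}, re-indexed by shifting $t \mapsto t-1$: with high probability, for every $t \in \{2,\ldots,T\}$ and every $b \in \{2,\ldots,B\}$, $\sigma_{t-1,1}(x_{t,1}) \leq \sigma_{t-2,b}(x_{t-1,b})$. Summing these $B-1$ bounds at a fixed $t \geq 2$ gives $(B-1)\sigma_{t-1,1}(x_{t,1}) \leq \sum_{b=2}^{B}\sigma_{t-2,b}(x_{t-1,b})$, i.e.\ the first-point cost of iteration $t$ is dominated by the batch costs of iteration $t-1$. Summing over $t = 2,\ldots,T$ and relabeling $t' = t-1$ on the right then yields
\[
(B-1)\sum_{t=2}^{T}\sigma_{t-1,1}(x_{t,1}) \;\leq\; \sum_{t=1}^{T-1}\sum_{b=2}^{B}\sigma_{t-1,b}(x_{t,b}) \;\leq\; \sum_{t=1}^{T}\sum_{b=2}^{B}\sigma_{t-1,b}(x_{t,b}),
\]
which matches the reformulated target everywhere except for the missing $t=1$ contribution $(B-1)\sigma_{0,1}(x_{1,1})$ on the left.

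Handling this first-iteration boundary term is the only place where the argument is not mechanical, and it is what I expect to be the main (though mild) obstacle. I would close it in the spirit of the original UCB-PE proof: note that $\sigma_{0,1}(x_{1,1})$ is upper bounded by the maximum prior standard deviation $\sigma_{\max} := \max_{x \in \mathcal{X}} \sigma_0(x)$, which is a universal constant, and absorb the resulting $O(B\,\sigma_{\max})$ additive slack either into the $t=T$ batch-variance terms on the right (which the telescope never consumes and which are nonnegative), or, more cleanly, into the final regret constants, since this inequality is only applied as one step of the larger regret bound where it is multiplied by $\sqrt{\beta_T}$ and later compared against $\gamma_{TB}$. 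The remainder of the proof is then immediate from the telescope.
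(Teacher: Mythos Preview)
Your proof is correct and follows essentially the same telescoping argument as the paper: invoke Lemma~\ref{nextBoundingLemma} to bound the first-point deviation of iteration $t+1$ by each batch-point deviation of iteration $t$, sum over $b$, and then over $t$. The paper's version is terser---it writes the per-iteration inequality $\sigma_{t-1,1}(x_{t,1}) + (B-1)\sigma_{t,1}(x_{t+1,1}) \leq \sum_{b=1}^{B}\sigma_{t-1,b}(x_{t,b})$ and then simply says ``summing over $t$, we get the desired result,'' glossing over the $t=1$ boundary term that you carefully isolate; your explicit handling of that constant is, if anything, more rigorous.
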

\begin{proof} The proof is the same as that of Lemma 3 in \cite{ContalUCBPE} but we provide it here for completeness. Using Lemma \ref{nextBoundingLemma} and the definitions of $x_{t,b}$, we have $\sigma_{t,1}(x_{t+1,1})\leq \sigma_{t-1,b}$ for all $b\geq 2$. Summing over all $b$, we get for all $t\geq 1, \sigma_{t-1,1}(x_{t,1}) + (B-1)\sigma_{t,1}(x_{t+1,1}) \leq \sum_{b=1}^{B}\sigma_{t-1,b}(x_{t,b})$. Now, summing over $t$, we get the desired result. 
\end{proof}

\begin{lemma}\label{sumVarsGamma} (Lemma 4 in \cite{ContalUCBPE}) The sum of the variances of the selected points are bounded by a constant factor times $\gamma_{TB}$, i.e., $\exists C'_1 \in \R, \ \sum\limits_{t=1}^{T}\sum\limits_{b=1}^{B}(\sigma_{t-1,b}(x_{t-1,b}))^2 \leq C'_1 \gamma_{TB}$. Here $C'_1 = \frac{2}{\log(1+\sigma^{-2})}$.
\end{lemma}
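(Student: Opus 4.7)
The plan is to combine an elementary function-analytic bound with the standard determinantal telescoping that is built into Gaussian conditioning, exactly as in Lemma~4 of \cite{ContalUCBPE}. First, I would invoke the usual normalization assumption on the GP kernel, $k(x,x)\le 1$, which immediately gives $\sigma_{t-1,b}^2(x_{t-1,b})\le 1$ uniformly. Since the map $y\mapsto y/\log(1+y)$ is monotonically increasing on $(0,\infty)$, on the interval $y\in[0,\sigma^{-2}]$ one has $y\le \tfrac{\sigma^{-2}}{\log(1+\sigma^{-2})}\,\log(1+y)$. Substituting $y=\sigma^{-2}\sigma_{t-1,b}^2(x_{t-1,b})$ yields the per-query comparison
\[
\sigma_{t-1,b}^2(x_{t-1,b}) \;\le\; \frac{1}{\log(1+\sigma^{-2})}\,\log\!\bigl(1+\sigma^{-2}\sigma_{t-1,b}^2(x_{t-1,b})\bigr).
\]

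Next I would sum this inequality over all $TB$ query instances. To handle the right-hand side, I would apply Schur's determinant identity iteratively along the ordered sequence $A=(x_{1,1},\dots,x_{T,B})$; this is precisely the identity already invoked in Section~3.2 of the main text to derive the DPP formulation, and it gives
\[
\sum_{t=1}^T\sum_{b=1}^B \log\!\bigl(1+\sigma^{-2}\sigma_{t-1,b}^2(x_{t-1,b})\bigr) \;=\; \log\det\!\bigl(I+\sigma^{-2}K_A\bigr) \;=\; 2\,I(y_A;f_A) \;\le\; 2\gamma_{TB},
\]
where the final inequality is just the definition of the maximum information gain $\gamma_{TB}$. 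Chaining this with the per-query comparison yields $\sum_{t,b}\sigma_{t-1,b}^2(x_{t-1,b})\le \tfrac{2}{\log(1+\sigma^{-2})}\gamma_{TB}$, which is exactly the claim with $C_1'=2/\log(1+\sigma^{-2})$.

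There is no deep obstacle here—the argument is essentially calculus plus a telescoping determinant identity. The only point requiring a bit of care is the interpretation of $\sigma_{t-1,b}$: it must denote the posterior standard deviation at the new query conditional on all $(t-1)B$ previously observed outcomes together with the $b-1$ points already placed in the current batch. This is legitimate in the batched GP setting because, as repeatedly emphasized earlier, the posterior variance does not depend on the realized $y$-values, so ``hallucinating'' within a batch does not change $\sigma_{t-1,b}$. Under this convention each factor $(1+\sigma^{-2}\sigma_{t-1,b}^2(x_{t-1,b}))$ is the genuine conditional variance factor required by Schur's identity, so the telescoping goes through verbatim and the lemma follows.
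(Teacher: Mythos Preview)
Your argument is correct and is precisely the standard proof of this lemma, going back to Lemma~5.4 of Srinivas \emph{et al.}\ \cite{Srinivas} and reproduced as Lemma~4 in \cite{ContalUCBPE}; the paper itself does not give an independent proof but simply cites \cite{ContalUCBPE}. Your clarification about the meaning of $\sigma_{t-1,b}$ (posterior variance conditioned on all prior batches plus the first $b-1$ points of the current batch) is exactly the right reading needed for the Schur/telescoping step to be valid, and it is consistent with how the quantity is used elsewhere in the paper.
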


We finally prove Theorem \ref{ESTDPPMAXThm}.
\begin{proof} (of Theorem \ref{ESTDPPMAXThm}) Clearly, the proof of Lemma \ref{ESTfBound} holds even for the last $B-1$ points selected in a batch. However, $t^{*}$ only goes over the the $T$ iterations rather than $TB$ evaluations. Thus, the cumulative regret is of the form
\begin{align*}
R_{TB} &= \sum\limits_{t=1}^{T}\sum\limits_{b=1}^{B} r_{t,b}\\
& \leq \sum\limits_{t=1}^{T}\sum\limits_{b=1}^{B} (\beta_{t^*}^{1/2} + \zeta^{1/2}_{TB})\sigma_{t-1,b}(x_{t,b})\\
& \leq (\beta_{t^*}^{1/2} + \zeta^{1/2}_{TB}) \sqrt{TB \sum\limits_{t=1}^{T}\sum\limits_{b=1}^{B} (\sigma_{t-1,b}(x_{t,b}))^2} \ \ \ \ \ \mbox{ by Cauchy-Schwarz}\\
& \leq \sqrt{TB C'_1 \gamma_{TB}}(\beta_{t^*}^{1/2} + \zeta^{1/2}_{TB}) \ \ \ \ \ \mbox{ by Lemma \ref{sumVarsGamma}}\\
& \leq \sqrt{TB C_1 \gamma_{TB}}(\beta_{t^*}^{1/2} + \zeta^{1/2}_{TB}) \ \ \ \ \ \mbox{ since} \ C'_1 \leq C_1
\end{align*}
\end{proof}

\subsection{Batch Bayesian Optimization via DPP sampling}
In this section, we prove the expected regret bounds obtained by DPP-SAMPLE (Theorem 3.4 of the main paper)
\begin{lemma}\label{sqSampLemma} For the points chosen by (UCB/EST)-DPP-SAMPLE, the inequality,
\begin{align*}
\sum\limits_{t=1}^{T} (\sigma_{t-1,1}(x_{t-1,1}))^2 \leq \frac{1}{B-1} \sum\limits_{t=1}^{T}\sum\limits_{b=2}^{B} (\sigma_{t-1,b}(x_{t-1,b}))^2
\end{align*}
holds with high probability.
\end{lemma}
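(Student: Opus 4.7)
The approach mirrors the proof of Lemma \ref{sumBoundingTelescopic} but for squared variances rather than variances. The plan is to produce a pointwise inequality on conditional deviations via Lemma \ref{nextBoundingLemma}, square it, and then sum in a telescoping fashion over $b$ and $t$.

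First, I would invoke Lemma \ref{nextBoundingLemma}, which guarantees that, with high probability over the noise and the DPP sampling randomness, $\sigma_{t,1}(x_{t+1,1}) \leq \sigma_{t-1,b}(x_{t,b})$ for every $t < T$ and every $b \in \{2,\ldots,B\}$. Posterior standard deviations are nonnegative, so squaring preserves the inequality on the same good event, yielding $\sigma_{t,1}(x_{t+1,1})^2 \leq \sigma_{t-1,b}(x_{t,b})^2$ jointly for all such $(t,b)$.

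Next, I would sum the squared inequality over $b=2,\ldots,B$ to obtain $(B-1)\,\sigma_{t,1}(x_{t+1,1})^2 \leq \sum_{b=2}^B \sigma_{t-1,b}(x_{t,b})^2$ for every $t < T$. Summing over $t=1,\ldots,T-1$ and relabeling $t+1 \mapsto t$ on the left-hand side telescopes to
\[
(B-1)\sum_{t=2}^T \sigma_{t-1,1}(x_{t,1})^2 \;\leq\; \sum_{t=1}^{T-1}\sum_{b=2}^B \sigma_{t-1,b}(x_{t,b})^2.
\]
Finally, I would reconcile the boundary indices to match the ranges $t=1,\ldots,T$ on both sides of the statement: the missing $t=1$ term on the left is absorbed by the extra $t=T$ inner sum on the right, using monotonicity of posterior variance in the conditioning set to ensure the trade is in our favor. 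Dividing by $B-1$ then delivers the claimed inequality.

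The principal obstacle is the transfer of Lemma \ref{nextBoundingLemma} from DPP-MAX to DPP-SAMPLE. In the greedy (DPP-MAX) case the inequality $\sigma_{t-1,b}(x_{t+1,1}) \leq \sigma_{t-1,b}(x_{t,b})$ is immediate because $x_{t,b}$ is by construction the variance-maximizer in $\mathcal{R}_t^+$. For DPP-SAMPLE the chosen batch is random, so the pointwise comparison must instead be justified through the DPP measure's concentration on subsets of large determinant (equivalently, large product of conditional variances), together with the fact, established in the proof of Lemma \ref{nextBoundingLemma}, that $x_{t+1,1} \in \mathcal{R}_t^+$ by the UCB/EST selection rule and Lemma \ref{ESTfBound}. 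Once that probabilistic step is secured via a union bound over $t$ and $b$, the remaining squaring and telescoping are purely algebraic.
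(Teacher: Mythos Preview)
Your approach is essentially the paper's: both rest on squaring the pointwise inequality of Lemma~\ref{nextBoundingLemma} and summing telescopically. The paper takes a slightly different algebraic route, however: it first asserts the squared analogue of Lemma~\ref{sumBoundingTelescopic} in full form (sum over all $b=1,\ldots,B$, divided by $B$), and then simply subtracts the $b=1$ contribution $\tfrac{1}{B}\sum_t(\sigma_{t-1,1}(x_{t,1}))^2$ from both sides and rearranges to obtain the factor $1/(B-1)$. Because the $b=1$ term appears identically on both sides, this subtraction is exact and no boundary reconciliation is needed.

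Your boundary step, by contrast, is flawed. You claim the missing $t=1$ term on the left is absorbed by the extra $t=T$ inner sum on the right ``using monotonicity of posterior variance in the conditioning set,'' but monotonicity goes the \emph{wrong way}: the prior deviation $\sigma_{0,1}(x_{1,1})$ is generically larger than the late-round posterior deviations $\sigma_{T-1,b}(x_{T,b})$, so the trade is against you, not in your favor. The paper's subtract-the-$b{=}1$-term maneuver is precisely what lets one avoid this comparison.

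Regarding your concern about transferring Lemma~\ref{nextBoundingLemma} from DPP-MAX to DPP-SAMPLE: the paper does not supply any additional argument here. It asserts in the proof of Lemma~\ref{nextBoundingLemma} that the conclusion is independent of whether DPP-MAX or DPP-SAMPLE is used, and in the present proof writes only ``Clearly, Lemma~\ref{sumBoundingTelescopic} holds in this case as well.'' Your skepticism about the step ``from the definition of $x_{t,b}$'' (which is immediate only for greedy variance-maximization) is reasonable, but the paper offers no further justification beyond the assertion; there is no concentration-of-DPP argument of the kind you sketch.
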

\begin{proof}
Clearly, Lemma \ref{sumBoundingTelescopic} holds in this case as well. Furthermore, it is easy to see that the inequality obtained by replacing every term in every summation by its square in Lemma \ref{sumBoundingTelescopic} is also true by a similar proof. Thus, we have
\begin{align*}
& \sum\limits_{t=1}^{T}(\sigma_{t-1,1}(x_{t,1}))^2 \leq \frac{1}{B}\sum\limits_{t=1}^{T}\sum\limits_{b=1}^{B}(\sigma_{t-1,b}(x_{t,b}))^2\\
\implies & (1 - \frac{1}{B})\sum\limits_{t=1}^{T}(\sigma_{t-1,1}(x_{t,1}))^2 \leq \frac{1}{B}\sum\limits_{t=1}^{T}\sum\limits_{b=2}^{B}(\sigma_{t-1,b}(x_{t,b}))^2\\
\implies & \sum\limits_{t=1}^{T}(\sigma_{t-1,1}(x_{t,1}))^2 \leq \frac{1}{B-1} \sum\limits_{t=1}^{T}\sum\limits_{b=2}^{B}(\sigma_{t-1,b}(x_{t,b}))^2
\end{align*}
Hence, we are done.
\end{proof}
 Define $\eta^{1/2}_t = \begin{cases}2\beta^{1/2}_t & \mbox{ for UCB}\\ (\beta^{1/2}_{t^{*}} + \zeta^{1/2}_t) & \mbox{ for EST}\end{cases}$. 
\begin{proof} (of Theorem 3.4 in the main paper) The expectation here is taken over the last $B-1$ points in each iteration being drawn from the $(B-1)$-DPP with the posterior kernel at the $t^{th}$ iteration. Using linearity of expectation, we get
\begin{align*}
\bigg(\E\big[\sum\limits_{t=1}^{T}\sum\limits_{b=1}^{B}r_{t,b}\big]\bigg)^2 &= \bigg(\sum\limits_{t=1}^{T}\E\big[\sum\limits_{b=1}^{B}r_{t,b}\big]\bigg)^2\\
&\leq \bigg(\sum\limits_{t=1}^{T} \eta_{tB}^{1/2}\E\big[\sum\limits_{b=1}^{B}\sigma_{t-1,b}(x_{t,b})\big]\bigg)^2\\
& \leq \eta_{TB} TB\sum\limits_{t=1}^{T}\E\big[\sum\limits_{b=1}^{B}(\sigma_{t-1,b}(x_{t,b}))^2\big] \ \ \ \ \ \ \ \mbox{ by Cauchy-Schwarz}\\
& \leq \eta_{TB} \frac{TB^2}{B-1}\sum\limits_{t=1}^{T}\E\big[\sum\limits_{b=2}^{B}(\sigma_{t-1,b}(x_{t,b}))^2\big] \ \ \ \ \ \ \ \mbox{ by Lemma \ref{sqSampLemma}}
\end{align*}
It is easy to see by Schur's identity and the definition of $\sigma_{t-1,b}$ that the term inside the expectation is just $\log\det((K_{t,1})_S)$, where $S$ is the set of $B-1$ points chosen in the $t^{th}$ iteration by DPP sampling with kernel $K_{t,1}$. Let $L=B-1$. Thus,
\begin{align*}
\bigg(\E\big[\sum\limits_{t=1}^{T}\sum\limits_{b=1}^{B}r_{t,b}\big]\bigg)^2 &\leq \eta_{TB} \frac{TB^2}{B-1}\sum\limits_{t=1}^{T}\E_{S \sim (L-DPP(K_{t,1}))}\big[\log\det((K_{t,1})_S)\big]\\
\end{align*}
Firstly, since the expectation is less than the maximum and $B/(B-1)\leq 2$, we get that the expected regret has the same bound as the regret bounds for DPP-MAX. This bound is however, loose. We get a bound below which may be worse but that is due to a loose analysis on our part and we can just choose the minimum of below and the DPP-MAX regret bounds.
Expanding the expectation, we get
\begin{align*}
&\E_{S \sim (L-DPP(K_{t,1}))}\big[\log\det((K_{t,1})_S)\big] \\ &= \sum\limits_{|S|=L}\frac{\det((K_{t,1})_S)\log(\det((K_{t,1})_S))}{\sum\limits_{|S|=L} \det((K_{t,1})_S)}\\
&= \sum\limits_{|S|=L}\frac{\det((K_{t,1})_S)\log(\frac{\det((K_{t,1})_S))}{\sum\limits_{|S|=L} \det((K_{t,1})_S)}) }{\sum\limits_{|S|=L} \det((K_{t,1})_S)} + \frac{\det((K_{t,1})_S))\log(\sum\limits_{|S|=L} \det((K_{t,1})_S))}{\sum\limits_{|S|=L} \det((K_{t,1})_S)}\\
&= -H(\mbox{L-DPP}(K_{t,1})) + \log(\sum\limits_{|S|=L} \det((K_{t,1})_S))\\
& \leq -H(\mbox{L-DPP}(K_{t,1})) + \log(|\mathcal{X}|^L \max \det((K_{t,1})_S))\\
& \leq -H(\mbox{L-DPP}(K_{t,1})) + L \log(|\mathcal{X}|)+ \log(\max \det((K_{t,1})_S))\\
\end{align*}
Plugging this into the summation and observing that the summation over last term is less than $C'_1 \gamma_{TB}$, we get the desired result.
\end{proof}
\subsection{Bounds on Information Gain for RBF kernels}
\begin{theorem} The maximum information gain for the RBF kernel after S timesteps is $\bigoh\left(\left(\log \size{S}\right)^{d}\right)$
\end{theorem}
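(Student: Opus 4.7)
The plan is to start from the information-theoretic characterization $\gamma_T = \max_{|A|\le T}\tfrac{1}{2}\log\det(I+\sigma^{-2}K_A)$ and then pass to eigenvalues: $\tfrac12\log\det(I+\sigma^{-2}K_A) = \tfrac12\sum_i \log(1+\sigma^{-2}\mu_i(K_A))$ where $\mu_i(K_A)$ are the ordered eigenvalues of the empirical kernel matrix. The first step is to relate the spectrum of $K_A$ to the spectrum of the integral operator associated with the RBF kernel. Using Mercer's theorem with respect to a Gaussian input measure on $\mathbb{R}^d$, the RBF kernel admits a decomposition $k(x,y)=\sum_{\vec{k}\in\mathbb{Z}_{\ge0}^d}\lambda_{\vec{k}}\phi_{\vec{k}}(x)\phi_{\vec{k}}(y)$ where $\lambda_{\vec{k}}=\prod_{j=1}^d a_j b_j^{k_j}$ with each $b_j<1$. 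A standard comparison argument (bounding $\mu_i(K_A)$ in terms of $|A|$ times the sorted integral-operator eigenvalues) then reduces the problem to bounding $\sum_i \log(1+\sigma^{-2}|S|\lambda_i)$ for the ordered multi-index eigenvalues.

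The second ingredient is the sharp counting bound enabled by the product structure: $N(\lambda):=|\{\vec{k}:\lambda_{\vec{k}}\ge\lambda\}|$ is exactly the number of lattice points in a simplex defined by $\sum_j k_j\log(1/b_j)\le \log(1/\lambda)+\text{const}$, and hence $N(\lambda)=\mathcal{O}((\log(1/\lambda))^d)$. The tail sum $\sum_{\lambda_{\vec{k}}<\lambda}\lambda_{\vec{k}}$ is also geometric by the same counting, giving $\mathcal{O}(\lambda\cdot(\log(1/\lambda))^{d-1})$ or similar decay. This double control (count and tail) is the heart of the argument and where the Seeger-style reformulation buys us sharpness compared with \cite{Srinivas}.

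With these tools in hand, I split the sum at a threshold $\lambda^\star \asymp 1/|S|$. On the head ($\lambda_i\ge\lambda^\star$), at most $N(\lambda^\star)=\mathcal{O}((\log|S|)^d)$ terms survive, each contributing $\log(1+\sigma^{-2}|S|\lambda_i)$. Instead of crudely bounding each by $\log(\sigma^{-2}|S|\lambda_1)$ (which would cost an extra $\log|S|$), I will use the identity $\sum_i\log(1+\sigma^{-2}|S|\lambda_i)=\int_0^{\lambda_1} N(\lambda)\,\frac{\sigma^{-2}|S|}{1+\sigma^{-2}|S|\lambda}\,d\lambda$ obtained by integration by parts, substitute $u=\sigma^{-2}|S|\lambda$, and observe that the logarithmic mass of $N(\lambda)=\mathcal{O}((\log(1/\lambda))^d)$ against the weight $1/(1+u)$ contributes $\mathcal{O}((\log|S|)^d)$. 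On the tail ($\lambda_i<\lambda^\star$) I use $\log(1+x)\le x$, reducing the sum to $\sigma^{-2}|S|\sum_{\lambda_i<\lambda^\star}\lambda_i$, which is $\mathcal{O}((\log|S|)^{d-1})$ by the tail bound above. Combining the two contributions yields $\gamma_{|S|}=\mathcal{O}((\log|S|)^d)$.

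The main obstacle will be the careful change-of-variables computation in the head integral: a naive pointwise estimate of the integrand gives $\mathcal{O}((\log|S|)^{d+1})$ (the Srinivas bound), and only the integration-by-parts formulation together with the tight counting $N(\lambda)=\mathcal{O}((\log(1/\lambda))^d)$ yields the improved exponent. A secondary technical point is justifying the comparison $\mu_i(K_A)\le |A|\lambda_i$ (or an appropriate modification thereof) so that the empirical spectrum can be replaced by the Mercer spectrum without a loss that washes out the improvement; this follows from the standard Nyström/operator approximation inequalities but must be stated carefully so the constants do not depend on $|A|$ in a way that cancels the gain.
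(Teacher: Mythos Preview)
Your proposal has a genuine gap. Once you pass through the comparison $\mu_i(K_A)\le|A|\,\lambda_i$ and reduce to bounding $\sum_i\log(1+\sigma^{-2}|S|\lambda_i)$ for the Mercer spectrum, the factor $|S|$ inside the logarithm cannot be removed by the integration-by-parts identity you describe. With $N(\lambda)\asymp(\log(1/\lambda))^d$ and $a=\sigma^{-2}|S|$, the integral
\[
\int_0^{\lambda_1}N(\lambda)\,\frac{a}{1+a\lambda}\,d\lambda
\]
is dominated by the range $\lambda\in[1/a,\lambda_1]$, where the integrand behaves like $(\log(1/\lambda))^d/\lambda$; the substitution $v=\log(1/\lambda)$ gives $\int^{\log a} v^d\,dv=\Theta\bigl((\log a)^{d+1}\bigr)=\Theta\bigl((\log|S|)^{d+1}\bigr)$. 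This is exactly the Srinivas exponent, so your assertion that ``only the integration-by-parts formulation together with the tight counting $N(\lambda)=\mathcal{O}((\log(1/\lambda))^d)$ yields the improved exponent'' is incorrect: that route reproduces $(\log|S|)^{d+1}$, not $(\log|S|)^{d}$. The tail part of your argument is fine; it is the head that fails to save the extra logarithm.

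The paper's argument is both shorter and avoids this trap. It invokes Seeger et al.\ for a bound of the form $\lambda_t(K_S)\le c\,B^{t^{1/d}}$ with $B<1$ \emph{directly on the empirical kernel matrix}, with the constant $c$ not carrying a factor of $|S|$. Setting $T=(\log_{1/B}|S|)^d$, each of the $T$ head terms is bounded by the absolute constant $\log(1+\sigma^{-2}c)$, so the head contributes $O(T)=O((\log|S|)^d)$; for $t>T$ one has $\lambda_t\le c/|S|$, and $\log(1+x)\le x$ makes the tail $O(1)$. The entire saving of one $\log|S|$ factor comes from bounding each head summand by $O(1)$ rather than by $O(\log|S|)$, which is possible precisely because no $|S|$ sits inside the logarithm. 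If you want to make your argument go through, what you need is a matrix-eigenvalue bound that is uniform in $|S|$ (this is what the paper extracts from Seeger et al.), not a sharper way of summing $\log(1+|S|\lambda_i)$.
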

\begin{proof}
Let $x_{S}$ be the vector of points from subset $S$, that is, $x_{S} = (x)_{x \in S}$, and the noisy evaluations of a function $f$ at these points be denoted by a vector $y_{S} = f_{S} + \epsilon_{S}$, where $f_{S} = \left(f(x)\right)_{x \in S}$ and $\epsilon_{S} \sim N(0, \sigma^{2}I)$. In Bayesian experimental design, the informativeness or the information gain of $S$ is given by the mutual information between $f$ and these observations $I(y_{S}; f) = H(y_{S}) - H(y_{S}~ |~ f)$. When $f$ is modeled by a Gaussian process, it is specified by the mean function $\mu(x) = \E{f(x)}$ and the covariance or kernel function $k(x, x') = \E{\left(f(x) - \mu(x)\right) \left(f(x') - \mu(x')\right)}$. In this case,
\[
I(y_{S}; f) = I(y_{S}; f_{S}) = \frac{1}{2} \log \det(I + \sigma^{-2} K_{S}),
\]
where $K_{S} = \left(k(x, x')\right)_{x, x' \in S}$. It is easy to see that
\[
\log \det(I + \sigma^{-2} K_{S}) = \sum_{t=1}^{\size{S}} \log \left(1 + \sigma^{-2} \lambda_{t}(K_{S})\right)
\]
Seeger {\em et al.} \cite{Seeger}  showed that for a Gaussian RBF kernel in $d$ dimensions, $\lambda_{t}(K) \leq c B^{t^{1/d}}$, with $B < 1$. Let $T = \left(\log_{1/B} \size{S}\right)^{d} \ll \size{S}$. Then for $t > T$, we have $\lambda_{t} \leq c/T$, and for $t \leq T$, we have $\lambda_{t} \leq c$. Therefore,
\begin{align*}
\log \det(I + \sigma^{-2} K_{S}) & = \sum_{t=1}^{\size{S}} \log \left(1 + \sigma^{-2} \lambda_{t}(K_{S})\right) \\
& = \sum_{t \leq T} \log \left(1 + \sigma^{-2} \lambda_{t}(K_{S})\right) + \sum_{t > T} \log \left(1 + \sigma^{-2} \lambda_{t}(K_{S})\right) \\
& \leq T \log \left(1 + \sigma^{-2} c\right) + \log \left(1 + \frac{c \sigma^{-2}}{\size{S}}\right)^{T} \\
& = O(T)  
\end{align*}
Thus, the maximum information gain for $S$ is upper bounded by $\bigoh\left(\left(\log \size{S}\right)^{d}\right)$.
\end{proof}
\pagebreak
\subsection{Experiments}
\subsubsection{Synthetic Experiments}
\begin{figure}[!h]\label{fig:synfig}
    \begin{subfigure}[b]{0.47\textwidth}
        \includegraphics[width=\textwidth, height=0.5\textwidth]{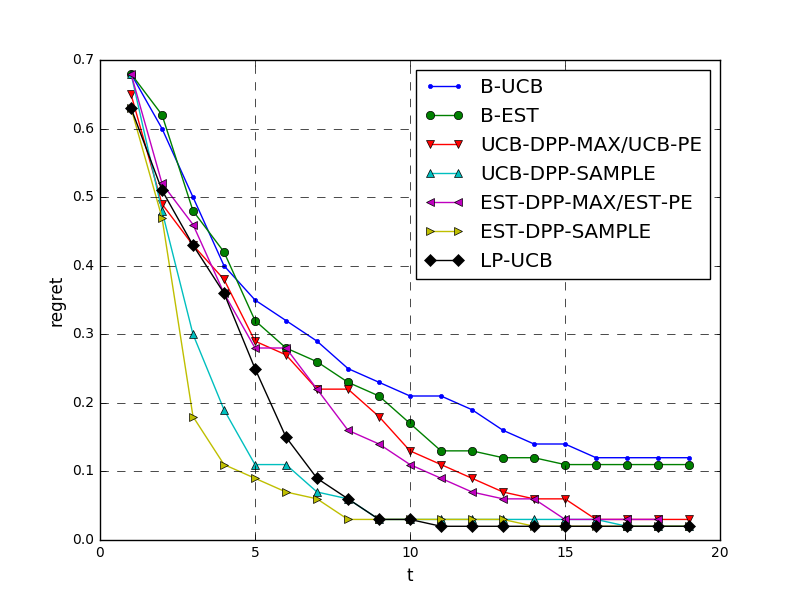}
         \caption{Cosines Function. B=5}
    \end{subfigure}
    \hspace{0.8em} 
    \begin{subfigure}[b]{0.47\textwidth}
        \includegraphics[width=\textwidth, height=0.5\textwidth]{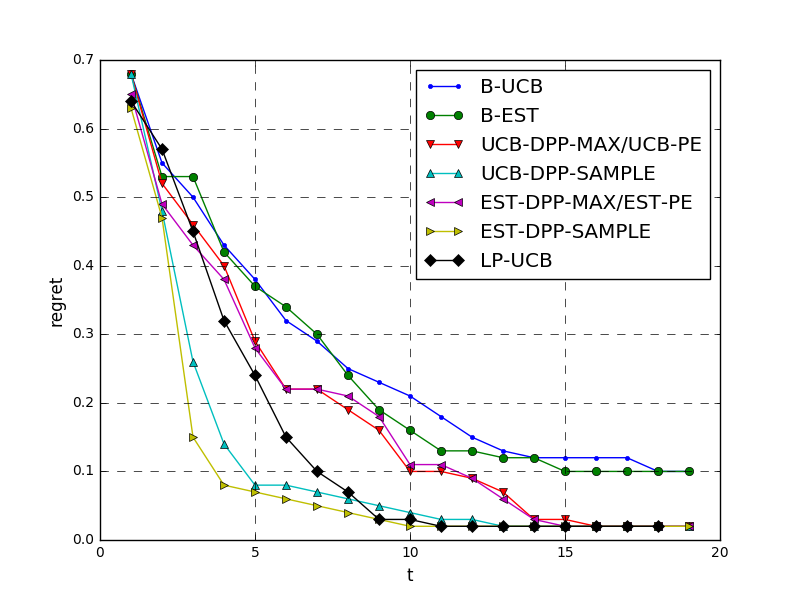}
         \caption{Cosines Function. B=10}
    \end{subfigure}
    \captionsetup{font=footnotesize}
    \caption{Immediate regret of the algorithms on the mixture of cosines synthetic function with B = 5 and 10}
\end{figure}
\subsubsection{Real-World Experiments}
We first provide the results for the Abalone experiment and then provide the Prec@1 values for the FastXML experiment on the Delicious experiment.
\begin{figure}[!h]\label{fig:synfig}
    \begin{subfigure}[b]{0.47\textwidth}
        \includegraphics[width=\textwidth, height=0.5\textwidth]{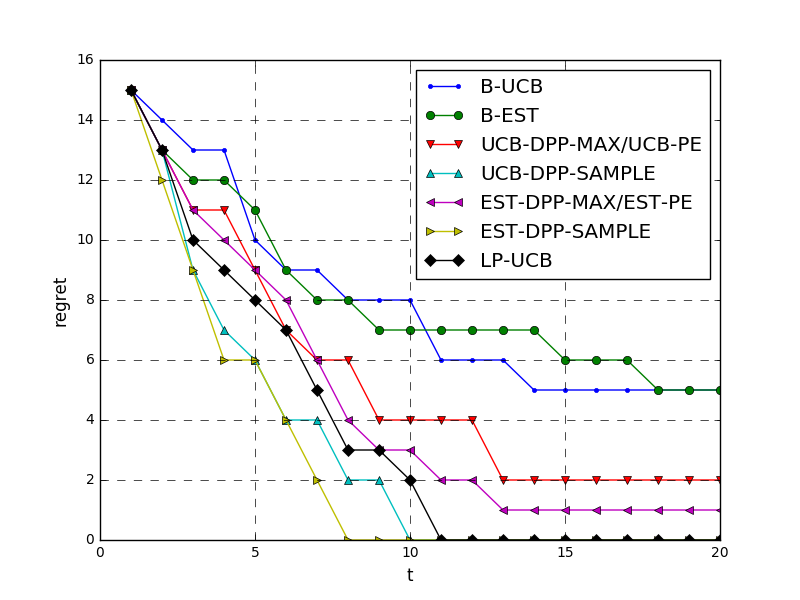}
         \caption{Abalone Function. B=5}
    \end{subfigure}
    \hspace{0.8em} 
    \begin{subfigure}[b]{0.47\textwidth}
        \includegraphics[width=\textwidth, height=0.5\textwidth]{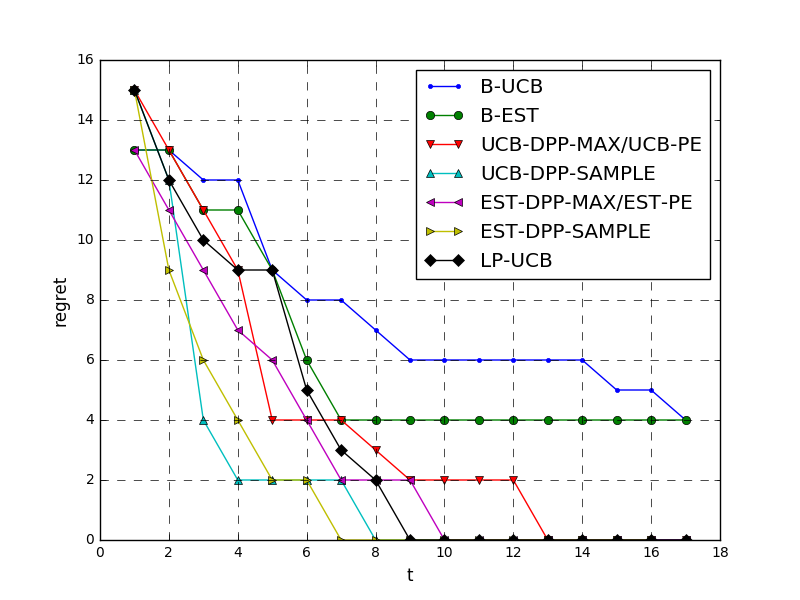}
         \caption{Abalone Function. B=10}
    \end{subfigure}
    \captionsetup{font=footnotesize}
    \caption{Immediate regret of the algorithms on the Abalone experiment with B = 5 and 10}
\end{figure}
\begin{figure}[!h]\label{fig:synfig}
    \begin{subfigure}[b]{0.47\textwidth}
        \includegraphics[width=\textwidth, height=0.5\textwidth]{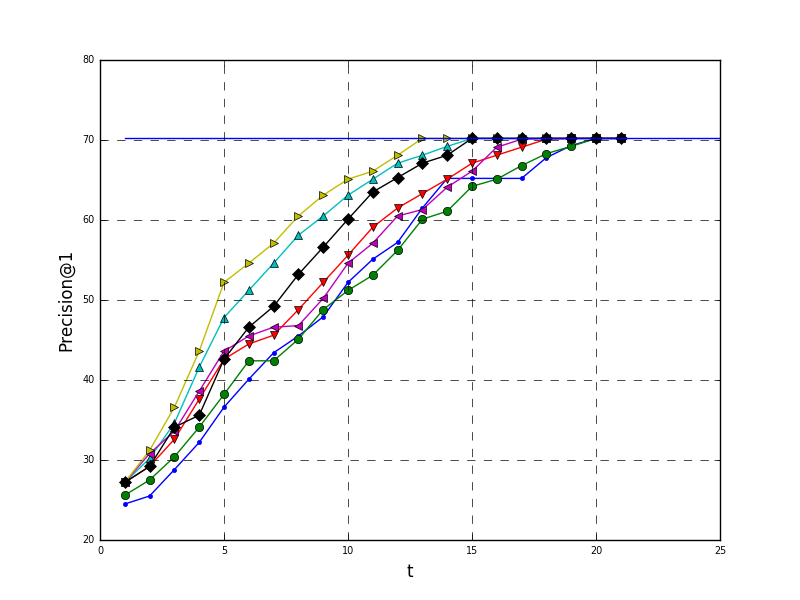}
         \caption{FastXML - Delicious Dataset. B=5}
    \end{subfigure}
    \hspace{0.8em} 
    \begin{subfigure}[b]{0.47\textwidth}
        \includegraphics[width=\textwidth, height=0.5\textwidth]{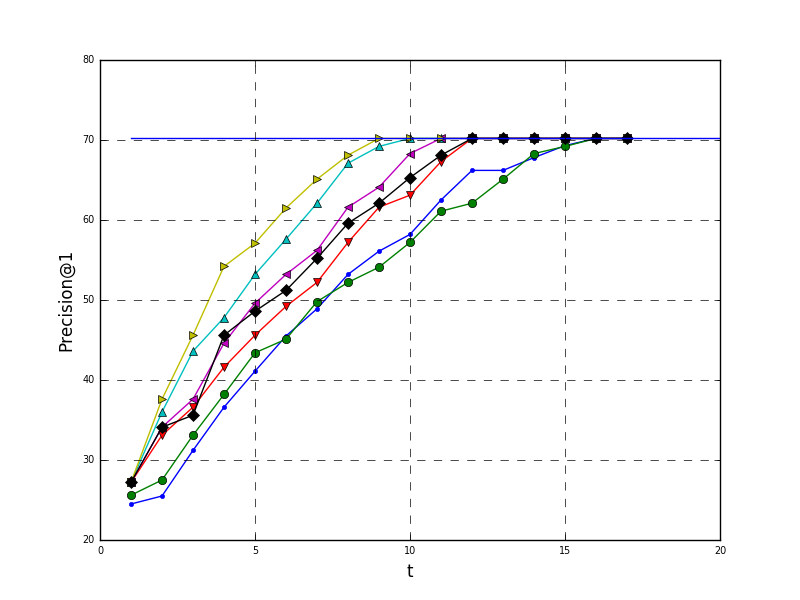}
         \caption{FastXML - Delicious Dataset. B=10}
    \end{subfigure}
    \captionsetup{font=footnotesize}
    \caption{Immediate regret of the algorithms on the FastXML experiment on the Delicious dataset with B = 5 and 10}
\end{figure}
\end{document}